\documentclass[11pt,a4paper]{article}

\usepackage[utf8]{inputenc}
\usepackage[T1]{fontenc}
\usepackage{lmodern}
\usepackage{microtype}

\usepackage[margin=2.5cm]{geometry}

\usepackage{amsmath}
\usepackage{amssymb}
\usepackage{amsfonts}
\usepackage{amsthm}
\usepackage{bm}
\usepackage{mathtools}

\usepackage{amsmath,amsfonts,bm}

\def\eqref#1{equation~\ref{#1}}

\def\1{\bm{1}}

\def\mN{{\bm{N}}}

\DeclareMathAlphabet{\mathsfit}{\encodingdefault}{\sfdefault}{m}{sl}
\SetMathAlphabet{\mathsfit}{bold}{\encodingdefault}{\sfdefault}{bx}{n}

\DeclareMathOperator*{\argmax}{arg\,max}

\usepackage{thmtools}
\usepackage{thm-restate}

\newtheorem{lemma}{Lemma}

\newtheorem{definition}{Definition}
\newtheorem{corollary}{Corollary}

\usepackage{graphicx}
\usepackage{subcaption}
\usepackage[table]{xcolor}
\usepackage{booktabs}
\usepackage{multirow}
\usepackage{array}

\usepackage{algorithm}
\usepackage{algpseudocode}

\usepackage{enumitem}
\usepackage{placeins}
\usepackage[most]{tcolorbox}
\usepackage[normalem]{ulem}
\usepackage{url}

\usepackage[round,authoryear]{natbib}
\usepackage[
    colorlinks=true,
    linkcolor=blue,
    citecolor=blue,
    urlcolor=blue
]{hyperref}

\newcommand{\RMV}[1]{}

\newcommand{\NOTE}[1]{}

\providecommand{\mN}[1]{}
\renewcommand{\mN}[1]{}

\title{
    \textbf{On the Calibration of Crowdsourcing Aggregators}
}

\date{}

\author{\centering
Gabriel Singer$^{1,4}$ \and
Reda Arab$^{2}$ \and
Maïssa Frikha$^{1}$ \and
Samuel Gruffaz$^{3}$ \and Vo Van Olivier$^{4}$ \and
Nicolas Vayatis$^{1}$ \and 
Argyris Kalogeratos$^{1}$
\\[0.5em]
$^{1}$ ENS Paris-Saclay, Université Paris-Saclay, France \\
$^{2}$ CentraleSupélec, Université Paris-Saclay, France \\
$^{3}$ Tampere University, Finland
\\
$^{4}$ SNCF, Paris
}

\usepackage{amssymb,mathtools}
\usepackage{booktabs}
\usepackage{threeparttable}
\usepackage[scaled=0.95]{inconsolata}   
\makeatletter
\renewcommand\paragraph{\@startsection{paragraph}{4}{\z@}%
  {0.6ex \@plus 0.2ex \@minus .1ex}{-1em}{\normalsize\bf}}
\makeatother

\usepackage{hyperref}   
\begin{document}

\maketitle

\begin{abstract}
Calibration has been extensively studied for classifiers, but remains largely unexplored for crowdsourcing aggregation methods, which are 
increasingly 
used for making probabilistic decisions. To the best of our knowledge, we provide the first theoretical and empirical study of calibration in label aggregation. 
To our great surprise, we find that widely used aggregation methods can be severely miscalibrated, and in opposite directions: Majority Vote is underconfident while model-based aggregators are overconfident. We prove that the Bayesian aggregator is perfectly calibrated under its data-generating model. We also show that exact calibration of weighted majority votes requires highly restrictive conditions, which helps explain why Majority Vote is generally miscalibrated in realistic crowds. 
Empirically, we study calibration across
$39$ human crowdsourcing datasets and LLM-based crowds. In both cases, we observe systematic miscalibration for
aggregation models such as Majority Vote, Dawid--Skene, GLAD, and CrowdFM. 
Finally, we propose a post-hoc calibration procedure that \emph{aggregates the aggregators}: for any aggregation function $\phi$, we introduce a transformation that first combines its predictive distribution with those of complementary aggregators, before applying temperature scaling. 
This simple and interpretable transformation makes temperature scaling more effective than when applied directly to $\phi$. It improves calibration, especially in the high-redundancy regime where there are many annotations per item, while generally preserving or even improving accuracy.
\end{abstract}

\section{Introduction}
\label{sec:introduction}
Modern machine learning relies on large and carefully labeled datasets, and crowdsourcing has become a standard way to collect such labels \citep{ibrahim2025learning}:
each item is sent to several annotators,
and their responses are then aggregated into a single prediction.
This approach makes data annotation scalable,
but it also introduces noise.
Annotators may have different levels of expertise,
different biases,
and different error patterns. Historically, crowdsourcing has mainly relied on human annotators.
This setting is now changing.
Large language models (LLMs) are increasingly used as automatic annotators
\citep{Gilardi_2023}.
Recent studies show that LLMs can match or even outperform human crowd workers
on some annotation tasks \citep{plaza-del-arco-etal-2024-wisdom}.

Most crowdsourcing methods focus on recovering an accurate label from multiple noisy annotations, and their theoretical properties have been extensively studied~\citep{ibrahim2025learning}. However, most aggregation methods also return a probability distribution over the possible classes~\citep{whitehill2009glad, crowdfm}, which naturally raises the question of their calibration.

Calibrated probabilities are important when the output of an aggregation model is interpreted as a risk, and ensures that estimated probabilities correspond closely to observed frequencies. Poorly calibrated predictions can therefore lead to systematic over- or underestimation of risk and, consequently, to bad decisions.

Typically, in medicine, machine learning models can estimate the probability for a patient to develop a disease, experience a complication, or respond to a treatment, thereby helping to 
inform treatment decisions \citep{Martel2026, VanCalster2025}. Insufficiently calibrated models can lead to inappropriate and potentially harmful clinical decisions \citep{VanCalster2019Calibration}. Aggregation offers complementary opportunities to support clinical decision-making: studies have shown that
combining independent diagnoses can improve accuracy, both when aggregating physicians’ outputs \citep{Barnett2019} and when combining differential diagnoses generated by multiple LLMs \citep{Barabucci2024}. This motivates examining the calibration of aggregation methods in medical settings.

\begin{figure*}[t]
    \centering

    \begin{minipage}[b]{0.34\textwidth}
        \centering
        \includegraphics[
            width=\linewidth,
            keepaspectratio
        ]{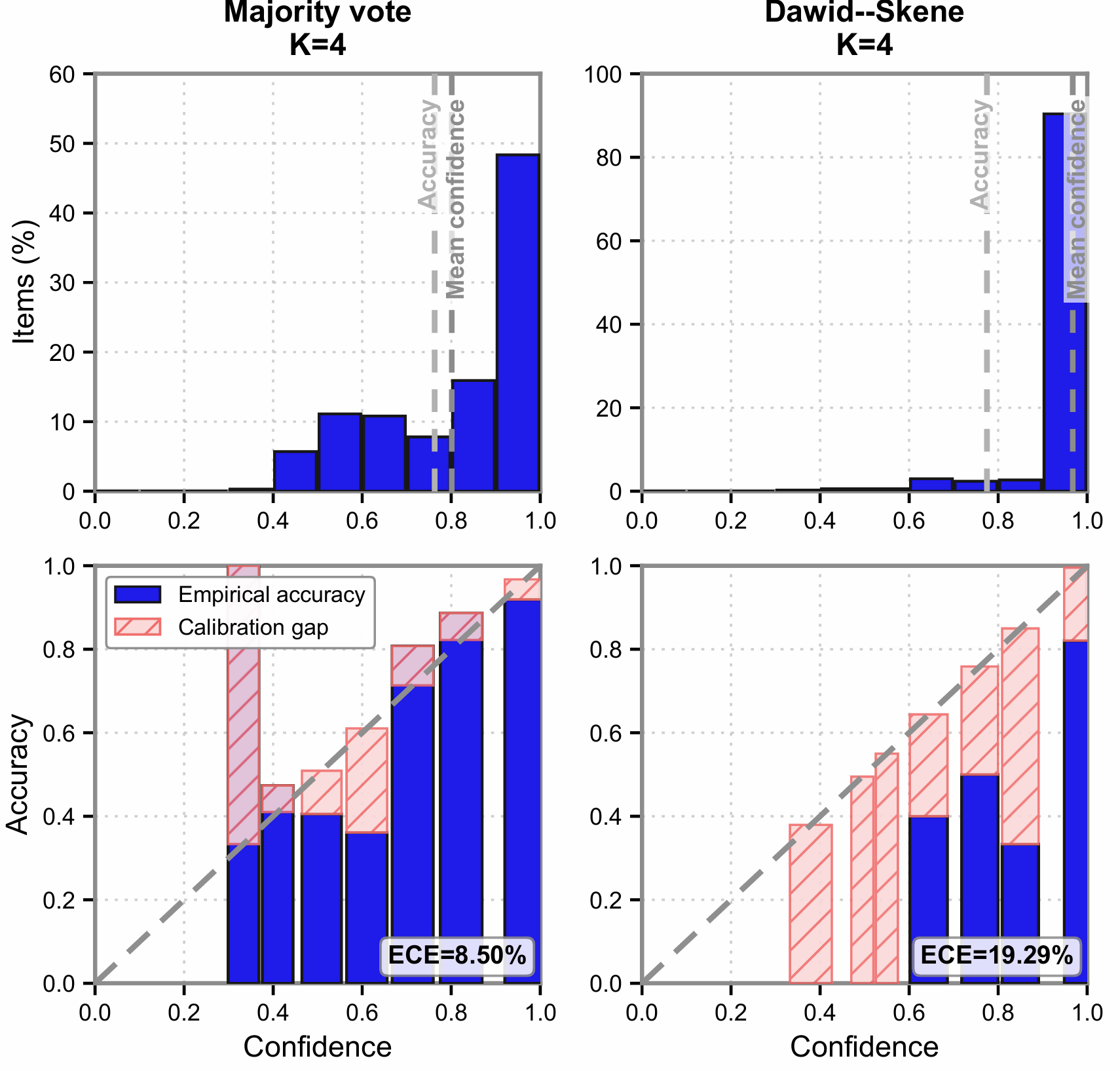}
        \caption*{\textbf{(a)} Adult dataset}
    \end{minipage}
    \hfill
    \begin{minipage}[b]{0.65\textwidth}
        \centering
        \includegraphics[
            width=\linewidth,
            keepaspectratio
        ]{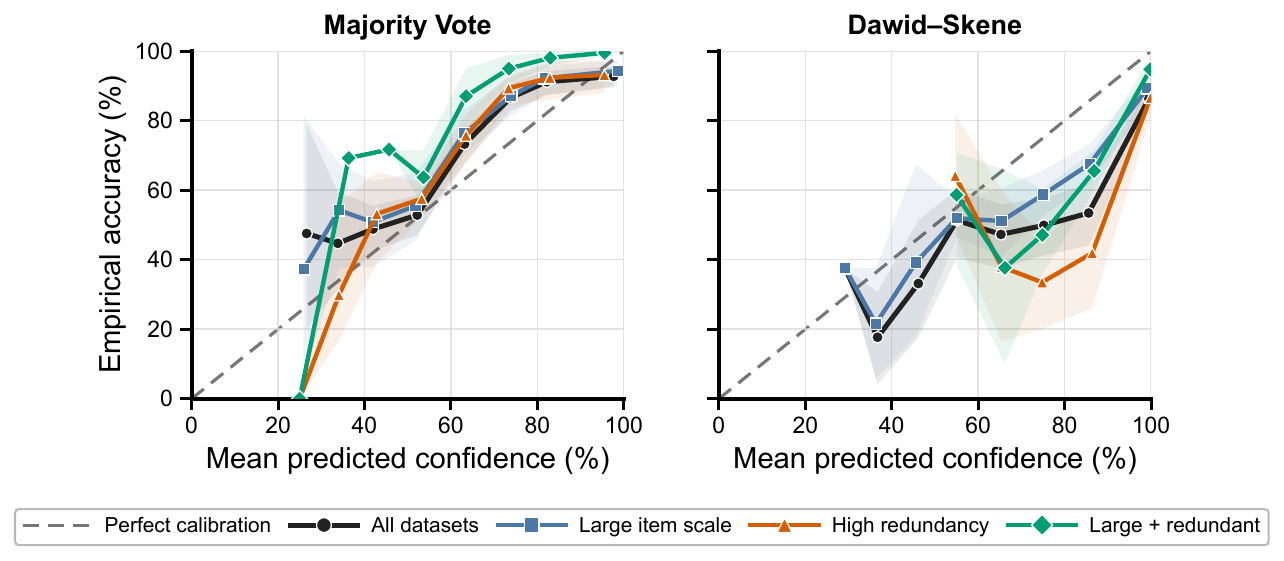}
        \caption*{\textbf{(b)} Average reliability over $39$ datasets}
    \end{minipage}

    \caption{
        Calibration of Majority Vote and Dawid--Skene.
        \textbf{(a)} Confidence histograms (top) and reliability diagrams
        (bottom) on the Adult dataset~\citep{wright2025aggregating}, with Expected Calibration Error (ECE, see Eq.~\eqref{ECE}) reported
        for each aggregator. $K$ is the number of classes.
        \textbf{(b)} Reliability diagrams averaged over $39$ datasets;
        points above the diagonal indicate \emph{underconfidence} and points below
        indicate \emph{overconfidence}. 
    }
    \label{fig:intro_calibration}
\end{figure*}

\paragraph{Contributions}
\begin{enumerate}[topsep=3pt, itemsep=2pt, parsep=0pt, leftmargin=1.5em]
    \item To the best of our knowledge, we provide the first empirical and theoretical study of calibration in crowdsourcing.
Our empirical study is done across $40$ real-world datasets and one specialized medical dataset. Two of them involve LLM crowds. 
To
our great surprise, we discover that Majority Vote, Dawid--Skene, GLAD, and CrowdFM models (see Section~\ref{sec:related-work} for the related work) are all severely miscalibrated: Majority Vote is \emph{underconfident} while model-based aggregators are \emph{overconfident} (see Figure~\ref{fig:intro_calibration}).
\item We characterize theoretically, in the multiclass setting, when crowd aggregation can be calibrated.
We first prove that the Bayesian aggregator is perfectly calibrated, and derive
necessary and sufficient conditions for asymptotic exact calibration of weighted majority vote. What is shown is that Majority Vote is generally asymptotically hard to calibrate.
\item We introduce a transformation of any aggregation function
$\phi$ that makes its probability output more effective under temperature scaling, and we wrap our approach in Algorithm~\ref{alg:general-Soup-ts}.
Compared to directly applying temperature scaling on $\phi$, our transformation improves the calibration of weighted-vote aggregators, and beyond those. For all our baselines, in the high-redundancy regime we improve calibration while preserving predictive performance.
\end{enumerate}
\section{Related work}\label{sec:related-work}
\paragraph{Learning from Crowd-Sourced Labels.}
Several label aggregation methods have been proposed in the literature.
Probabilistic approaches include Dawid--Skene~\citep{dawid1979maximum},
GLAD~\citep{whitehill2009glad}, and CrowdFM~\citep{crowdfm}.
These methods have mainly been studied through prediction error, identifiability,
sample complexity, and assumptions on annotator dependence~\citep{ibrahim2025learning}.
Weighted majority votes form another important family.
\citet{weightedvoteTheory} study the consistency of the
optimal weighted majority rule, while Crowd-Kit~\citep{Ustalov_2024} provides
practical implementations of weighted variants such as Wawa~\citep{Ustalov_2024} and ZBS~\citep{Ustalov_2024}.
\paragraph{Calibration.}
The calibration of probabilistic predictors has been extensively studied outside the crowdsourcing literature. \citet{guo2017calibrationmodernneuralnetworks} show that modern neural networks are poorly calibrated despite improved accuracy, and benchmark several post-hoc calibration techniques -- histogram binning, isotonic regression, Bayesian binning into quantiles, and Platt scaling -- introducing \emph{temperature scaling} as a simple, effective extension of the latter. Temperature scaling has since been applied well beyond image classification, notably to calibrate pre-trained language models~\citep{desai2020calibration}.  Prior work has documented limitations of temperature scaling -- namely its inability to correct classwise miscalibration~\citep{kull2019temperaturescalingobtainingwellcalibrated} or to guarantee a verified calibration error~\citep{verifiedUncertaintyCalibration} -- and, in line with these findings, we observe that it fails to correct the miscalibration of weighted-majority-vote-type aggregators. Rather than replacing temperature scaling itself, we address this limitation by acting on the aggregator being calibrated.

\paragraph{LLMs as annotators.}
Large Language Models (LLMs) have recently emerged as a scalable alternative to human annotators \citep{bhat-varma-2023-large}. Their use as labelers seems to be promising, especially in text classification~\citep{Gilardi_2023,ronningstad-etal-2024-gpt}. The interested reader can also find the review of \citet{tan2024largelanguagemodelsdata} about the use of LLMs for data annotation and synthesis.
\section{Problem setting: a probabilistic point of view}
The primary goal of label aggregation in crowdsourcing is to recover the unknown
ground-truth label of an item from multiple noisy annotations.

\paragraph{Preliminaries.}
Let $R\geq 1$ be the number of annotators,  let 
\(
Y_{1},\ldots,Y_{R}
\)
be $R$ random variables. We denote by $Y$ the unknown ground-truth label taking a value in \(
[K]:=\{0,\ldots,K-1\}\). We consider that \(
Y_{1},\ldots,Y_{R}
\) are independent conditionally to $Y$ and all taking values in
\(
[K]\).

A probabilistic aggregation model $\phi$ is a function that takes as input random variables and outputs a probability distribution
\(
\phi:[K]^R\longrightarrow \Delta_{K},
\)
where
\(
\Delta_{K}
:=
\big\{
q\in[0,1]^K:
\sum_{k\in[K]} q_k=1
\big\}
\).

The goal of the 
model $\phi$ is to estimate the posterior
distribution
\(
\left(
\mathbb{P}\!\left(
Y=k\mid Y_{1},\ldots,Y_{R}
\right)
\right)_{k\in[K]}.
\)
The final aggregated label is then obtained by
\begin{equation}\label{aggregation}
Y_{R}^\phi
\in
\operatorname*{arg\,max}_{k\in[K]}
\phi_{k}(Y_{1},\ldots,Y_{R}).
\end{equation}
Such probabilistic aggregation models include classical methods such as
Dawid--Skene \citep{dawid1979maximum} and GLAD \citep{whitehill2009glad}. It also covers
more recent deep learning models such as CrowdFM, a foundation model pretrained on synthetic data \cite{crowdfm}. Even majority voting can naturally be interpreted in a probabilistic way: for any $r\in\{1,\ldots,R\}$, set  
\(
\phi_{k}(Y_{1},\ldots,Y_{R})
=
\frac{1}{R}
\sum_{r=1}^{R}
\mathbb{I}\{Y_{r}=k\}.
\)
\paragraph{Probabilistic calibration.}
Unlike the confidence-based notion 
\citep{guo2017calibrationmodernneuralnetworks}, which only calibrates the predicted class confidence
\(
c_\phi(R)
:=
\max_{k\in[K]}
\phi_k(Y_1,\ldots,Y_R),
\)
we consider calibration of the full  distribution
\(
\phi(Y_{1},\ldots,Y_{R}).
\)
This distinction matters for \(K\geq 3\), since a predictor may be calibrated
with respect to its top$_k$-class confidence while remaining miscalibrated at the
level of the full distribution as shown in \cite{pmlr-v89-vaicenavicius19a}. Note that, being calibrated in terms of the full distribution implies to be also calibrated with respect to the subset of its top$_k$-class. Readers interested in a detailed example may refer to Appendix~\ref{app:mv-example}.

\begin{definition} (Calibrated model)
  We say that $\phi$ is calibrated if and only if for any $k \in [K]$,\begin{equation} \label{perfectcalibrationequation}\mathbb{P}\!\left( Y=k \mid \phi(Y_{1},\ldots,Y_{R}) \right) = \phi_{k}(Y_{1},\ldots,Y_{R}) \qquad \text{a.s.} \end{equation} 
\end{definition}

\paragraph{Calibration Error.}
The condition~\eqref{perfectcalibrationequation} must hold almost surely, in practice, this condition is impossible to check, so one instead considers their average difference.
This naturally leads to the calibration error
\citep{verifiedUncertaintyCalibration,pmlr-v89-vaicenavicius19a,ecconsistencyestimator}.
Let
\(
Z
=
\left(
\mathbb{I}\{Y=1\},
\ldots,
\mathbb{I}\{Y=K\}
\right)
\in \{0,1\}^K
\)
denote the one-hot encoding of the ground-truth label, and let
\(
\Phi_R
:=
\phi_R(Y_{1},\ldots,Y_{R})
\in \Delta_K.
\)

\begin{definition}
For $p \in [1,+\infty[$, the $L^p$ calibration error of $\phi_R$ is defined as
\[
\operatorname{CE}_p(\phi_R)
:=
\left(
\mathbb{E}\left[
\left\|
\mathbb{E}\!\left[
Z \mid \Phi_R
\right]
-
\Phi_R
\right\|_p^p
\right]
\right)^{1/p}.
\]
\end{definition}
For \(p=1\), this quantity corresponds to the \(L^1\)  calibration
error, which has been studied theoretically in
\cite{verifiedUncertaintyCalibration,pmlr-v89-vaicenavicius19a,
ecconsistencyestimator}.
 Following \cite{samworth} and \cite{singer2026optimalfairaggregationcrowdsourced},
we study aggregation in the regime \(R\to\infty\).

\section{Theoretical Analysis of the Calibration for Weighted Votes}\label{sectionweightedtheory}

Here, we study theoretically the calibration of weighted majority votes.
Among the most famous one, you have the Bayesian vote, which minimizes the classification error, and the standard Majority Vote.
In this section, we identify annotator number $i$ with their response random variable $Y_i$
, thus this can be applied any type of annotator, human or LLM.
\paragraph{The Bayesian vote is perfectly calibrated.}
The first fact is that Bayesian vote, that can be written as a weighted majority vote see \citep{singer2026optimalfairaggregationcrowdsourced}, is perfectly calibrated.

\begin{restatable}{proposition}{PropBayesCalibrated}\label{bayesCalibration}
    The Bayesian vote defined as 
    \(
\phi^\star
=
\operatorname*{arg\,min}_{\phi}
\mathbb P\!\left(Y^\phi_R\neq Y\mid Y_1,...,Y_R\right),
\)
    is calibrated in the full distribution sense,top$_k$ class sens and $L^{p}$ sens for any $p\geq 1$.
\end{restatable}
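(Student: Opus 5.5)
The plan is to first identify the Bayesian vote's probabilistic output with the true posterior, and then use the tower property of conditional expectation. For each realization of the responses, the classification error $\mathbb P(Y^\phi_R\neq Y\mid Y_1,\dots,Y_R)$ is minimized by any $\phi$ whose argmax coincides with that of the posterior. The natural probabilistic output of the Bayesian vote is therefore
\[
\phi^\star_k(Y_1,\dots,Y_R)=\mathbb P(Y=k\mid Y_1,\dots,Y_R),\qquad k\in[K].
\]
Using conditional independence, this can be written as a weighted vote with weights $\log \mathbb P(Y_r\mid Y=k)$ plus the log-prior, passed through a softmax, as in \citep{singer2026optimalfairaggregationcrowdsourced}. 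I will state explicitly that ``the Bayesian vote'' refers to this posterior-valued $\phi^\star$; this is the interpretation under which the claim is meaningful.

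\textbf{Step 1 (full-distribution calibration).} Let $\Phi^\star=\phi^\star(Y_1,\dots,Y_R)$. It is a measurable function of $(Y_1,\dots,Y_R)$, so $\sigma(\Phi^\star)\subseteq\sigma(Y_1,\dots,Y_R)$. The tower property gives
\[
\mathbb P(Y=k\mid \Phi^\star)=\mathbb E\big[\mathbb P(Y=k\mid Y_1,\dots,Y_R)\,\big|\,\Phi^\star\big]=\mathbb E[\Phi^\star_k\mid\Phi^\star]=\Phi^\star_k \quad\text{a.s.}
\]
This is exactly \eqref{perfectcalibrationequation}. Because $[K]^R$ is finite, one could equally check it directly by summing over the finitely many response vectors that share the same value of $\Phi^\star$. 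No measure-theoretic subtlety arises.

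\textbf{Step 2 (top$_k$ calibration).} Let $T$ be any function of $\Phi^\star$, such as the top-class index or the top-$k$ set $S$ together with the associated confidence $\sum_{j\in S}\Phi^\star_j$. Conditioning Step 1 further on $\sigma(T)\subseteq\sigma(\Phi^\star)$ gives
\[
\mathbb P(Y\in S\mid T)=\mathbb E\Big[\textstyle\sum_{j\in S}\Phi^\star_j\,\Big|\,T\Big]=\sum_{j\in S}\Phi^\star_j .
\]
The last equality holds because this sum is $T$-measurable. This step is the one requiring the most care. The point to verify is that the top-$k$ set is a measurable function of $\Phi^\star$ once ties are broken deterministically, which holds for any fixed tie-breaking rule.

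\textbf{Step 3 ($L^p$ calibration).} Step 1 gives $\mathbb E[Z\mid\Phi^\star]=\Phi^\star$ a.s. The integrand of $\operatorname{CE}_p$ is therefore zero a.s., so $\operatorname{CE}_p(\phi^\star)=0$ for every $p\geq1$.
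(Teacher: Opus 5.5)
Your proposal is correct and follows essentially the paper's route. The paper invokes Proposition 1 of \citet{pmlr-v89-vaicenavicius19a}, which is exactly the tower-property computation you carry out in Step 1, and then treats the $L^p$ and top-class statements as immediate consequences; your version is slightly more careful, since you state explicitly that $\phi^\star$ must be taken to be the posterior (the $\arg\min$ of the error alone only fixes the argmax), and you justify top-class calibration by conditioning on a coarser $\sigma$-algebra rather than by the paper's one-line remark.
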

Following \cite{pmlr-v89-vaicenavicius19a}, it appears that any aggregation method defined by the mean of a function $h$ as follows:
$\phi_{h} : (Y_1,...,Y_R)\mapsto \left(\mathbb{P}(Y=k \mid h(Y_1,...,Y_R))\right)_{k\in [K]}$ is also calibrated.  
\paragraph{Analysis of weighted Majority Vote calibration.}
For
$\omega^{(R)}=(\omega_1^{(R)},\ldots,\omega_R^{(R)})\in\Delta_R$, we say that $\Phi_R$ is  weighted majority vote if one can right
\(
    \Phi_R
    :=
   \left( \sum_{r=1}^R \omega_r^{(R)} \mathbb{I}\{Y_r=k\}\right)_{k\in [K]}
\)
Let $\mathcal G$ denote the class of sequences weighted majority vote $(\Phi_R)_{R\geq1}$
such that
\begin{equation}\label{Gcond}
    \lim_{R\to \infty}\Big\| \omega^{(R)}\Big\|_{2} =0
    \text{ and }
    \Phi_R\mid Y=j
    \xrightarrow{\mathbb P} A^{(j)}
    \quad\text{for every } j\in [K].
\end{equation}
where $A^{(1)},\ldots,A^{(K)}\in\Delta_K$ are deterministic and pairwise
distinct. The assumption that $(A^{(j)})_{j\in [k]}$ is deterministic corresponds to the standard
law-of-large-numbers regime for crowdsourcing, it includes in particular the classical Condorcet Jury Theorem \citep{ condorcetmonotonheterogeneindep}. 
One can show, see Appendix~\ref{app:proofs}, that~\eqref{Gcond} defines uniquely the limite aggregator as 
$\Phi:=\lim_{R\to \infty}\Phi_R=A^{Y}.$
Condition~\eqref{Gcond} only requires the crowd average accuracy
to stabilize at $\infty$. So, it allows a lot of different heterogeneous annotators configurations. 
\begin{restatable}{proposition}{PropGNonempty}\label{prop:g-nonempty}
The class $\mathcal G$ is nonempty. Majority
Vote belongs to $\mathcal G$ and 
Wawa also belongs to $\mathcal G$. 
\end{restatable}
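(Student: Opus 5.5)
Since nonemptiness follows once Majority Vote is shown to lie in $\mathcal G$, the plan reduces to two membership claims: Majority Vote, and Wawa. For each, we verify the two requirements in~\eqref{Gcond}: the weight vector vanishes in $\ell_2$, and $\Phi_R\mid Y=j$ converges in probability to a deterministic limit $A^{(j)}$, with the $A^{(j)}$ pairwise distinct. The common tool is a weak law of large numbers for weighted sums of conditionally independent bounded variables. Fix $j$ and $k$, and write $p_{r}^{(j,k)}:=\mathbb P(Y_r=k\mid Y=j)$. Conditionally on $Y=j$, the quantity $\Phi_{R,k}-\sum_r\omega_r^{(R)}p_r^{(j,k)}$ has mean zero and variance at most $\sum_r(\omega_r^{(R)})^2/4=\|\omega^{(R)}\|_2^2/4$. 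Chebyshev's inequality then gives
\[
\Phi_{R,k}-\sum_{r=1}^R\omega_r^{(R)}p_r^{(j,k)}\xrightarrow{\mathbb P}0
\quad\text{whenever}\quad \|\omega^{(R)}\|_2\to0 .
\]
So, given deterministic weights with vanishing $\ell_2$ norm, the only remaining requirement is that the weighted average confusion row $\sum_r\omega_r^{(R)}p_r^{(j,\cdot)}$ converges to some $A^{(j)}$ with the $A^{(j)}$ pairwise distinct. This is exactly the ``crowd average accuracy stabilizes'' assumption stated after~\eqref{Gcond}, and I will impose it explicitly: the Cesàro averages of the confusion rows converge, with distinct limits. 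As a concrete witness, take i.i.d.\ annotators with a common confusion matrix whose rows are distinct.

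\textbf{Majority Vote.} Here $\omega_r^{(R)}=1/R$, so $\|\omega^{(R)}\|_2=R^{-1/2}\to0$. Under the stabilization assumption, the lemma above gives $A^{(j)}_k=\lim_R\frac1R\sum_r p_r^{(j,k)}$. In the homogeneous witness case, $A^{(j)}$ is simply row $j$ of the confusion matrix, and these rows are distinct. This settles both membership of Majority Vote and the nonemptiness of $\mathcal G$.

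\textbf{Wawa.} This is the main obstacle, because its weights are data-dependent and therefore random. Each annotator's weight is proportional to their agreement rate with the Majority Vote labels, computed over the items of the dataset. I would take the number of items $n\to\infty$ (or fix the weights at their population values). Then the empirical agreement rate of annotator $r$ converges to a deterministic $a_r$, namely the probability that $Y_r$ equals the Majority Vote label. Normalized, this gives deterministic limiting weights $\omega_r^{(R)}=a_r/\sum_s a_s$.

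Two properties of these weights are needed. First, $\|\omega^{(R)}\|_2\to0$ follows if the $a_r$ are bounded away from $0$. Since $a_r\le1$, we get $\|\omega^{(R)}\|_2^2\le R/(\sum_s a_s)^2=O(1/R)$. Second, the $\omega$-weighted average of the confusion rows must converge to a limit whose entries are distinct across $j$. In the homogeneous witness, all $a_r$ are equal, so Wawa coincides with Majority Vote, which is enough to show membership under that model.

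In the heterogeneous case, I would additionally assume that the pairs $(a_r,p_r^{(j,\cdot)})$ have converging weighted averages. I would handle the residual randomness of the estimated weights with a union bound plus a uniform concentration argument across annotators. This is the delicate step, and I would state it under the population-weight simplification if a full uniform argument becomes unwieldy.
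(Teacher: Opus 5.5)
Your overall route is the paper's. You treat the Wawa weights as deterministic population-level quantities, show $\|\omega^{(R)}\|_2\to0$, and concentrate $\Phi_R\mid Y=j$ via the conditional variance bound $\tfrac{1}{4}\|\omega^{(R)}\|_2^2$. The paper uses Hoeffding for MV and exactly this variance bound for Wawa. Your fallback to population weights is precisely the paper's definition $s_r^{(R)}=\mathbb{P}(Y_r=Y^{MV}_R)$, so the uniform-concentration detour over items is not needed.

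Your explicit stabilization and distinctness hypotheses, and the homogeneous witness, are more careful than the paper. The paper simply posits the Cesàro limit $p_{k,j,\infty}$ for MV. For Wawa it only shows the conditional variance vanishes, and it never addresses convergence of the conditional means or distinctness of the $A^{(j)}$. So those extra hypotheses are not a defect relative to the paper. For your homogeneous witness the Wawa argument is also complete, since the weights are exactly $1/R$.

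The genuine gap is in the heterogeneous Wawa $\ell_2$ step. You assume the agreement rates $a_r$ are bounded away from $0$. That hypothesis is unnecessary, and it excludes, for instance, crowds with a growing fraction of adversarial annotators who almost never agree with the majority. The missing idea is a pigeonhole lower bound on the normalizer. Since $Y^{MV}_R$ is an argmax of the vote counts,
\[
\sum_{r=1}^R a_r=\mathbb{E}\Big[\sum_{r=1}^R\mathbb{I}\{Y_r=Y^{MV}_R\}\Big]=\mathbb{E}\Big[\max_{k\in[K]}\sum_{r=1}^R\mathbb{I}\{Y_r=k\}\Big]\ \geq\ \frac{R}{K},
\]
because $K$ counts summing to $R$ cannot all be below $R/K$. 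Combined with $a_r^2\le a_r$, this gives $\|\omega^{(R)}\|_2^2\le 1/\sum_s a_s\le K/R$ for every crowd, with no assumption on individual annotators. Your own bound $R/(\sum_s a_s)^2$ would also yield $K^2/R$ once you plug this in. The only hypotheses then left for Wawa are those ensuring that $\sum_r\omega_r^{(R)}\mathbb{P}(Y_r=\cdot\mid Y=j)$ converges to pairwise distinct limits; the paper needs these too but leaves them implicit.
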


\begin{restatable}{theorem}{ThmWMVCalibration}\label{thm:wmv-calibration}
For any $\Phi\in \mathcal{G}$ and $p\geq 1:$ the following conditions are equivalent: 
if
\begin{equation}\label{calibMVasymptotic}
    \lim_{R\to \infty}\operatorname{CE}_{p}(\phi_R)=0
\end{equation}
    Then, for any $(k,j)\in [K]^{2}$: \begin{equation}\label{AsymtoticcalibrationMV}
        \delta_{k,j}=\lim_{R\to \infty}\sum_{1\leq r\leq R}\omega^{(R)}_r\mathbb{P}\left(Y_r=j\mid Y=k\right),
    \end{equation}
    and reciprocally.
\end{restatable}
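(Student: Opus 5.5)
The plan is to reduce both directions to a statement about the limit aggregator $\Phi=A^{Y}$. Since $\Phi_R\mid Y=j\to A^{(j)}$ in probability for every $j$ and $Y$ takes finitely many values, $\Phi_R\to A^{Y}$ in probability unconditionally. Everything lives in the bounded set $\Delta_K$, so this convergence also holds in every $L^p$. The first step is to identify $A^{(k)}$. I write $\mathbb E[\Phi_{R,j}\mid Y=k]=\sum_r\omega^{(R)}_r\mathbb P(Y_r=j\mid Y=k)$. By conditional independence, $\operatorname{Var}(\Phi_{R,j}\mid Y=k)\le\sum_r(\omega_r^{(R)})^2=\|\omega^{(R)}\|_2^2\to0$. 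Combined with bounded convergence, this gives
\[
A^{(k)}_j=\lim_{R\to\infty}\sum_{r=1}^R\omega^{(R)}_r\mathbb P(Y_r=j\mid Y=k),
\]
with the limit existing. Condition~\eqref{AsymtoticcalibrationMV} therefore says exactly that $A^{(k)}=e_k$ for every $k$, i.e.\ $\Phi=Z$.

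Next I compute the calibration error of the limit. The $A^{(j)}$ are pairwise distinct, so $\sigma(\Phi)=\sigma(Y)$. Hence $\mathbb E[Z\mid\Phi]=Z$, and
\[
\operatorname{CE}_p(\Phi)^p=\mathbb E\|Z-A^{Y}\|_p^p=\sum_k\mathbb P(Y=k)\,\|e_k-A^{(k)}\|_p^p,
\]
which vanishes iff $A^{(k)}=e_k$ for all $k$ with $\mathbb P(Y=k)>0$. I assume a full-support prior, as is implicit in the statement.

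The remaining, and main, step is the continuity claim $\operatorname{CE}_p(\phi_R)\to\operatorname{CE}_p(\Phi)$. The calibration map is not continuous in general, so I would prove it directly. Fix $\varepsilon$ smaller than half the minimal distance between the $A^{(j)}$, and let $B_j$ be the $\varepsilon$-ball around $A^{(j)}$. On the event $E_R=\{\Phi_R\in B_Y\}$, whose probability tends to $1$, the value of $\Phi_R$ determines $Y$, so $\mathbb E[Z\mid\Phi_R]$ equals $Z$ there. More carefully, set $\hat Y=j$ when $\Phi_R\in B_j$. Then
\[
\mathbb P(Y\neq\hat Y\mid\Phi_R)\to0\quad\text{in }L^1,
\]
because $\mathbb E[\mathbb P(Y\ne\hat Y\mid\Phi_R)]=\mathbb P(Y\ne\hat Y)\to0$. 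It follows that $\mathbb E[Z\mid\Phi_R]-e_{\hat Y}\to0$ in $L^1$, and hence in $L^p$ by boundedness. Also $e_{\hat Y}\to Z$ in probability and $\Phi_R\to A^Y$ in $L^p$. The triangle inequality then gives
\[
\operatorname{CE}_p(\phi_R)\to\big(\mathbb E\|Z-A^Y\|_p^p\big)^{1/p}.
\]

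Combining the three steps, $\lim_R\operatorname{CE}_p(\phi_R)=0$ iff $A^{(k)}=e_k$ for all $k$, iff~\eqref{AsymtoticcalibrationMV} holds. This proves both directions simultaneously for every $p\ge1$.
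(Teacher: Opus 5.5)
Your proof is correct, but it is organized differently from the paper's.

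\textbf{How the paper argues.} The paper treats the two directions separately.
\begin{itemize}
\item For the direct implication it does roughly what your second and third steps do. It passes to the limit in $\operatorname{CE}_p(\Phi_R)^p$ through Lemma~\ref{limitePhi}, which gives $\mathbb{E}[Z_k\mid\Phi_R]\to Z_k$. It then uses $\sigma(Y)\subset\sigma(\Phi)$ to get $\mathbb{E}\|Z-\Phi\|_p^p=0$.
\item For the converse it needs no continuity lemma. It first obtains $A^{(k)}=e_k$ by bounded convergence, so that $\Phi_R\to Z$. It then uses $\Phi_{R,k}$ itself as the competitor in the projection bound $\mathbb{E}[(Z_k-\eta_{k,R})^2]\le\mathbb{E}[(Z_k-\Phi_{R,k})^2]\to 0$.
\end{itemize}

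\textbf{What your route buys.} You instead prove the single identity $\lim_R\operatorname{CE}_p(\phi_R)^p=\sum_k\mathbb{P}(Y=k)\,\|e_k-A^{(k)}\|_p^p$ and read both directions off it. This gives more than the theorem: it quantifies the asymptotic miscalibration of every member of $\mathcal G$. For majority vote it does so directly in terms of the averaged confusion rows $A^{(k)}$. The paper's converse is shorter, because in that direction the limit is already $Z$ and no separation argument is needed. Your route uses pairwise distinctness and the separating balls in both directions.

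\textbf{Your decoder is the right one.} The paper's Lemma~\ref{limitePhi} uses the argmax rule $Y^\phi_R$ and the bound $\mathbb{P}(Y^\phi_R\neq Y\mid Y=k)\le\mathbb{P}(\Phi_R\notin B_k\mid Y=k)$. This bound fails whenever some $A^{(k)}$ is not maximized at coordinate $k$. For example, $A^{(0)}=(0.7,0.3)$ and $A^{(1)}=(0.6,0.4)$ arise from majority vote over identical annotators, and that vote's error tends to $\mathbb{P}(Y=1)>0$. Your ball decoder $\hat Y$ is $\sigma(\Phi_R)$-measurable and satisfies $\mathbb{P}(\hat Y\neq Y)\le\mathbb{P}(\Phi_R\notin B_Y)\to 0$ regardless. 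That is exactly what your $L^1$ step $\|\mathbb{E}[Z\mid\Phi_R]-e_{\hat Y}\|_1\le 2\,\mathbb{P}(Y\neq\hat Y\mid\Phi_R)$ requires.

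\textbf{Two minor points.}
\begin{itemize}
\item The variance bound in your first step is superfluous. Convergence in probability to $A^{(k)}$ is part of the definition of $\mathcal G$, so bounded convergence alone identifies $A^{(k)}_j$ as the limit of $\sum_r\omega^{(R)}_r\mathbb{P}(Y_r=j\mid Y=k)$.
\item The informal sentence claiming $\mathbb{E}[Z\mid\Phi_R]=Z$ on $E_R$ is not literally true. Drop it in favor of the rigorous argument that follows it, and define $\hat Y$ arbitrarily off $\bigcup_j B_j$.
\end{itemize}
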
 
Where $\delta_{k,j}$ denotes the Kronecker delta, i.e., $\delta_{k,j}=1$ if $k=j$ and $\delta_{k,j}=0$ otherwise.

If an aggregator is not asymptotically perfectly efficient it cannot be calibrated.
\begin{restatable}{corollary}{CorInaccurate}\label{cor:inaccurate-not-calibrated}
Let $\phi\in \mathcal{G},$
if $\displaystyle \limsup_{R\to \infty}\mathbb{P}\left(Y_R^{\phi}\neq Y\right)>0,$
   then $\phi$ is not asymptotically calibrated.
\end{restatable}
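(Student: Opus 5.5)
We argue by contraposition, using Theorem~\ref{thm:wmv-calibration}. Suppose $\phi\in\mathcal G$ is asymptotically calibrated, i.e.\ $\lim_{R\to\infty}\operatorname{CE}_p(\phi_R)=0$ for some $p\geq 1$. The theorem then gives, for every $(k,j)\in[K]^2$,
\[
\lim_{R\to\infty}\sum_{r=1}^R\omega_r^{(R)}\mathbb P(Y_r=j\mid Y=k)=\delta_{k,j}.
\]
We will show that this forces $\mathbb P(Y_R^\phi\neq Y)\to 0$, contradicting the assumption $\limsup_{R\to\infty}\mathbb P(Y_R^\phi\neq Y)>0$.

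First we identify the limits $A^{(k)}$. Since $\Phi_R$ is a weighted majority vote,
\[
\mathbb E\big[(\Phi_R)_j\mid Y=k\big]=\sum_{r}\omega_r^{(R)}\mathbb P(Y_r=j\mid Y=k).
\]
Conditionally on $Y=k$, the annotators $Y_r$ are independent, so
\[
\operatorname{Var}\big((\Phi_R)_j\mid Y=k\big)\le \sum_r(\omega_r^{(R)})^2=\|\omega^{(R)}\|_2^2\to 0
\]
by \eqref{Gcond}. Chebyshev's inequality then gives $(\Phi_R)_j\mid Y=k\to \delta_{k,j}$ in probability. By uniqueness of the limit in probability, $A^{(k)}=e_k$, the $k$-th vertex of the simplex. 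In fact this step is not needed: combining the theorem's conclusion with the concentration bound already yields convergence to $e_k$ directly.

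Next we bound the error. On the event $\{Y=k\}$, the event $Y_R^\phi\neq k$ requires some $j\neq k$ with $(\Phi_R)_j\geq(\Phi_R)_k$. Convergence in probability to $e_k$ means that for $R$ large, with probability tending to one, $(\Phi_R)_k>1/2>(\Phi_R)_j$ for all $j\neq k$. Hence $\mathbb P(Y_R^\phi\neq Y\mid Y=k)\to 0$ for each $k$. Summing over the finitely many $k$, weighted by $\mathbb P(Y=k)$, gives $\mathbb P(Y_R^\phi\neq Y)\to 0$, which contradicts the hypothesis on the $\limsup$.

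The only delicate point is tie-breaking in the $\arg\max$ of \eqref{aggregation}. A tie involving class $k$ would need $(\Phi_R)_k\le 1/2$, which is an event of vanishing probability, so any tie-breaking rule gives the same conclusion. The main dependency is Theorem~\ref{thm:wmv-calibration}, which is taken as given. The concentration estimate is routine given $\|\omega^{(R)}\|_2\to 0$ and conditional independence.
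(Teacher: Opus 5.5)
Your proof is correct and takes essentially the paper's route: argue by contraposition, and show that the Kronecker-delta condition of Theorem~\ref{thm:wmv-calibration}, together with $\|\omega^{(R)}\|_2\to0$, forces $\mathbb P(Y_R^\phi\neq Y)\to0$ by controlling the event $\{(\Phi_R)_k\le 1/2\}$ on $\{Y=k\}$. The differences are minor. You use Chebyshev where the paper uses Hoeffding; either suffices, and Hoeffding only adds an exponential rate. You also state explicitly that the theorem's direct implication is what turns $\operatorname{CE}_p(\phi_R)\to0$ into the Kronecker condition, a step the paper leaves implicit.
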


\paragraph{Interpretation.}
The condition~\eqref{AsymtoticcalibrationMV} maybe interpreted as follows: consider a growing crowd with respect to the number $R$ of annotators. If for each class $k$, the average accuracy of the crowd is close to be perfect  then, on average it's calibration error is close to $0$. 
So taking the contraposition of the direct implication it shows you that if you do not have a very accurate crowd, you are not calibrated. This interpretation is explicitly proven in
Corollary~\ref{cor:inaccurate-not-calibrated}. 
\section{Temperature Scaling for probabilistic aggregation functions}
\label{sec:crowd_ts}

We adopt temperature scaling as our calibration method for its simplicity and single-parameter form, which makes it more robust to overfitting on limited calibration data than more flexible alternatives such as Dirichlet calibration~\citep{kull2019temperaturescalingobtainingwellcalibrated}; moreover, temperature scaling is order-preserving, allowing us to recalibrate an aggregator's confidence without altering its underlying decisions. 

\paragraph{Calibration and post-hoc calibration.}

A probabilistic predictor is calibrated when its predicted probabilities match
empirical frequencies. For instance, predictions made with confidence $0.8$
should be correct about $80\%$ of the time. Post-hoc calibration aims to correct predicted probabilities without retraining
the original model. A standard method is temperature scaling
\citep{guo2017calibrationmodernneuralnetworks}, which transforms the estimated probability of the item i to be in class k : $\widehat p_{ik}$. For $T>0,$
\(
\widetilde p_{ik}(T)
=
{\widehat p_{ik}^{1/T}} / {\sum_{\ell=1}^{K}\widehat p_{i\ell}^{1/T}}.
\)
The temperature is fitted on a separate calibration set by minimizing the
negative log-likelihood. If we denote $Y_i$ the true label,
\(
T^\star
:=
\arg\min_{T>0}
-\sum_{i=1}^{n_{\mathrm{cal}}}
\log \widetilde p_{iY_i}(T).
\)

\paragraph{Temperature scaling for label aggregation.}
Let $\phi:[K]^R\to\Delta_K$ be a probabilistic aggregation function and let
\(
Q_i^\phi=\phi(Y_{i,1},\ldots,Y_{i,R}).
\)
Temperature scaling transforms $Q_i^\phi$ through a single scalar parameter $\frac{1}{T}$,
preserving the class ranking while only rescaling its log-ratios.
More flexible calibrators, such as Dirichlet calibration \citep{kull2019temperaturescalingobtainingwellcalibrated}, increase the
expressiveness of this transformation across classes.

\paragraph{A crowdsourcing-aware correction.}
Given a target aggregator $\phi$ and a 
$\mathcal{R}=\{\psi_1,\ldots,\psi_M\}$ a fixed set of reference
aggregators. 
Our intuition is that aggregators with complementary calibration profiles can
compensate for each other before temperature scaling.
The role of $\phi$ is to preserve the predictive accuracy of the target aggregator.
By design the reference set $\mathcal R$ contains any collection of
aggregators with complementary calibration profiles. Temperature scaling is then fitted on $Q_i^{\mathrm{Soup},\phi}$.
\begin{algorithm}[h!]\small
\caption{Crowdsourcing-Aware Temperature Scaling}
\label{alg:general-Soup-ts}
\begin{algorithmic}[1]

\Require Fixed set of aggregators
$\mathcal{R}=\{\psi_1,\ldots,\psi_M\}$,

\ \ Calibration set
\(
\mathcal D_m^{\mathrm{cal}}
=
\left\{
\left(
Y_i,
\{Q_i^\psi\}_{\psi\in\mathcal{R}}
\right)
\right\}_{i=1}^{m},
\)

\ \ Predictive distributions
$\{Q_j^\psi\}_{\psi\in\mathcal{R}}$
for the items $j$ to calibrate.

\Statex \!\!\!\!\!\!\!\!\textbf{Output:}
Calibrated predictive distributions
$\{\widetilde Q_j^{\mathrm{Soup},\phi}\}_j$.

\vspace{0.2em}
\hrule
\vspace{0.5em}

\State
$\operatorname{Soup}
\left(
\{Q^\psi\}_{\psi\in\mathcal R}
\right)
\gets
\displaystyle
\frac{1}{|\mathcal R|}
\sum_{\psi\in\mathcal R} Q^\psi$
\Comment{Define the aggregation operator}

\For{$i=1,\ldots,m$}
    \State
    $Q_i^{\mathrm{Soup},\phi}
    \gets
    \operatorname{Soup}
    \left(
    \{Q_i^\psi\}_{\psi\in\mathcal R}
    \right)$
    \Comment{Build calibration Soup}
\EndFor

\State
$\widehat T_m^\phi
\gets
\displaystyle
\arg\min_{T>0}
-\frac{1}{m}
\sum_{i=1}^{m}
\log
\left[
\mathcal T_T
\left(
Q_i^{\mathrm{Soup},\phi}
\right)
\right]_{Y_i}$
\Comment{Fit temperature on calibration data}

\For{each item $j$ to calibrate}
    \State
    $Q_j^{\mathrm{Soup},\phi}
    \gets
    \operatorname{Soup}
    \left(
    \{Q_j^\psi\}_{\psi\in\mathcal R}
    \right)$
    \Comment{Build prediction Soup}

    \State
    $\widetilde Q_j^{\mathrm{Soup},\phi}
    \gets
    \mathcal T_{\widehat T_m^\phi}
    \left(
    Q_j^{\mathrm{Soup},\phi}
    \right)$
    \Comment{Apply learned temperature}
\EndFor

\State \Return
$\{\widetilde Q_j^{\mathrm{Soup},\phi}\}_j$

\end{algorithmic}
\end{algorithm}

\section{Experiments}
\subsection{Datasets and methods presentations}
\paragraph{Datasets annotated by humans.}
Regarding human annotators, we evaluate the methods on $39$ real-world crowdsourcing datasets. 
Including image annotation, sentiment analysis,
natural-language tasks, and toxicity detection. Among others, we have: LabelMe \citep{russell2008labelme}, the NLP crowdsourcing
tasks of \citet{snow2008cheap}, Jigsaw toxicity data
\citep{borkan2019nuanced,jigsaw_unintended_bias_2023}, and CIFAR-10H
\citep{peterson2019humanuncertaintymakesclassification}.
Additional crowdsourcing benchmarks are taken from  Crowd-Kit \citep{Ustalov_2024} and CrowdFM \citep{crowdfm}.
We consider four regimes: all datasets ($39$ datasets), large-scale datasets with at least $1{,}000$ items ($21$ datasets), high-redundancy datasets with at least $10$ annotations per item on average ($19$ datasets), and their intersection ($6$ datasets).

\paragraph{Dataset annotated by LLMs.} 

LLMs were evaluated on two datasets. 
Firstly, to compare the calibration of aggregated human and LLM predictions, we use the crowd-annotated version of the PASCAL RTE-1 benchmark \citep{rte,snow2008cheap}, one of the $39$ real-world crowdsourcing datasets annoted by humans. This is a binary textual entailment task which requires determining whether a hypothesis can be inferred from a given textual premise. The dataset contains $800$ Text-Hypothesis pairs, each with a reference label and each pair annotated by ten human judgments collected from a pool of $164$ non-expert annotators. We obtain these existing human annotations from the data released with \citet{wright2025aggregating}\footnote{PASCAL RTE-1 accessed via: \\\url{https://github.com/copenlu/aggregating-crowd-annotations-ood/tree/main/data/RTE}} and collect LLM predictions on the same pairs. We then apply the same aggregation rules to human and LLM responses and evaluate the calibration of the resulting predictions against the reference labels.

As discussed in the introduction, the importance of calibrated predictions for clinical decision-making motivates examining the calibration of aggregation methods in medical settings. For that purpose, we decided to evaluate LLMs on a second dataset, a medical benchmark: MMLU Clinical knowledge \citep{mmlu}.
This benchmark evaluates clinical knowledge through a test set of $265$ multiple-choice questions, each with four answer options.\footnote{MMLU dataset accessed via Hugging Face: \\\url{https://huggingface.co/datasets/cais/mmlu/viewer/clinical_knowledge}.} Results on this dataset are reported in \ref{app:exp_results}.

\paragraph{Design of the experiments.}
We evaluate eleven models accross three open-source models families (Qwen, Mistral, and Gemma) and one closed-source model (from OpenAI). Because we run the open-source models on an Apple M5 integrated GPU, we restrict them to fewer than $10$ billion parameters. 
Table~\ref{tab:ensemble-models-params} in appendix lists the different models, with HuggingFace's IDs and generation parameters.

To reproduce the human evaluation, we apply the same aggregation rules, treating $R$ distinct models as annotators.

\subsection{Miscalibration of aggregators}
We follow the standard empirical Expected Calibration Error (ECE)~\cite{guo2017calibrationmodernneuralnetworks}. Given $M$ confidence bins $(B_m)_{m=1}^M$,

\begin{equation}\label{ECE}
    \mathrm{ECE}
=
\sum_{m=1}^{M}
\frac{|B_m|}{n}
\left|
\operatorname{acc}(B_m)-\operatorname{conf}(B_m)
\right|,
\end{equation}

where $\operatorname{acc}(B_m)$ and $\operatorname{conf}(B_m)$ are the empirical accuracy and mean confidence in bin $B_m$. Lower values indicate better calibration.
\begin{figure*}[t]
    \centering

    \begin{minipage}[b]{0.45\textwidth}
        \centering
        \includegraphics[
            width=\linewidth,
            keepaspectratio
        ]{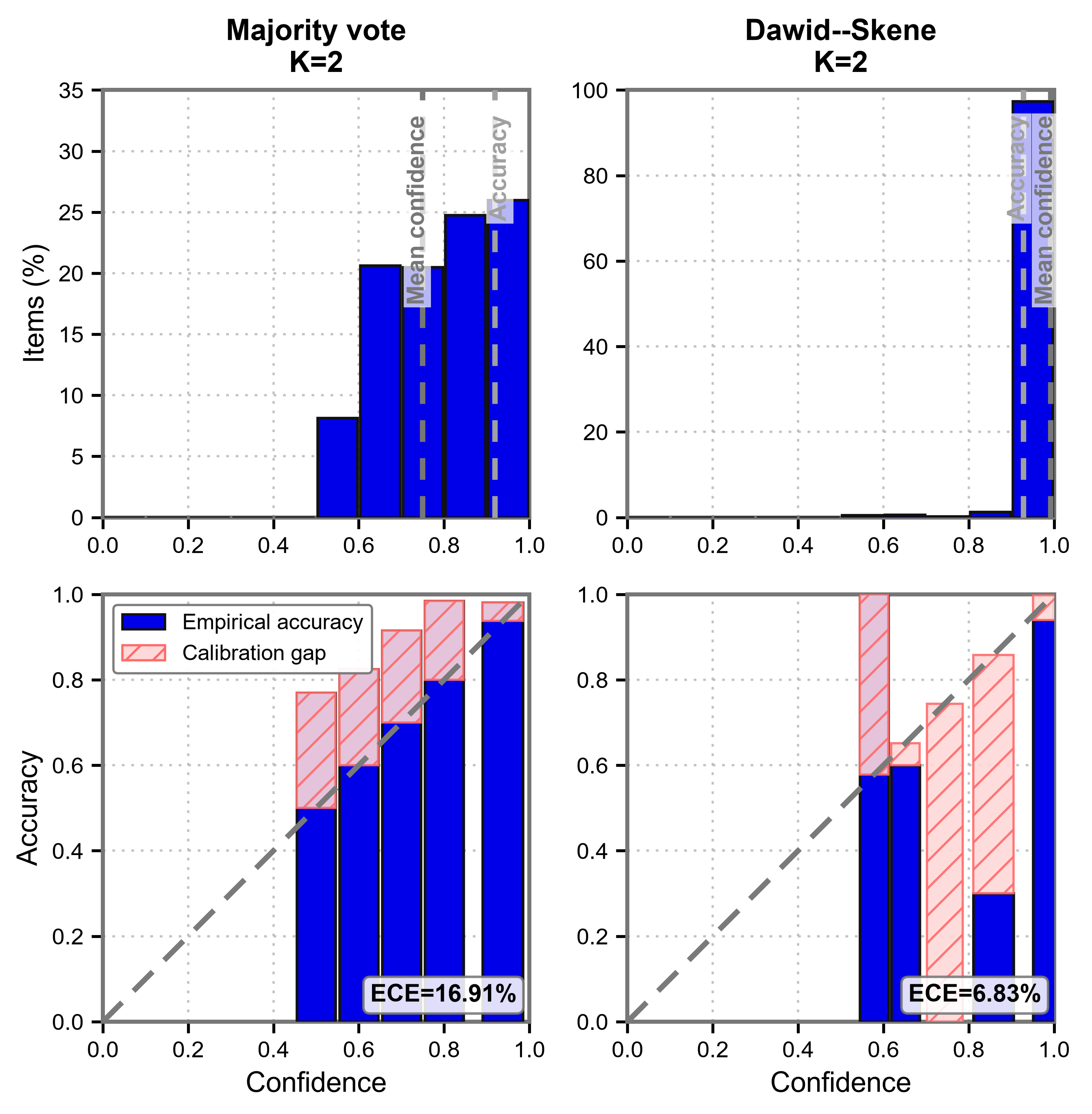}
        \caption*{\textbf{(a)} RTE dataset - Human annotated}
    \end{minipage}
    \hfill
    \begin{minipage}[b]{0.45\textwidth}
        \centering
        \includegraphics[
            width=\linewidth,
            keepaspectratio
        ]{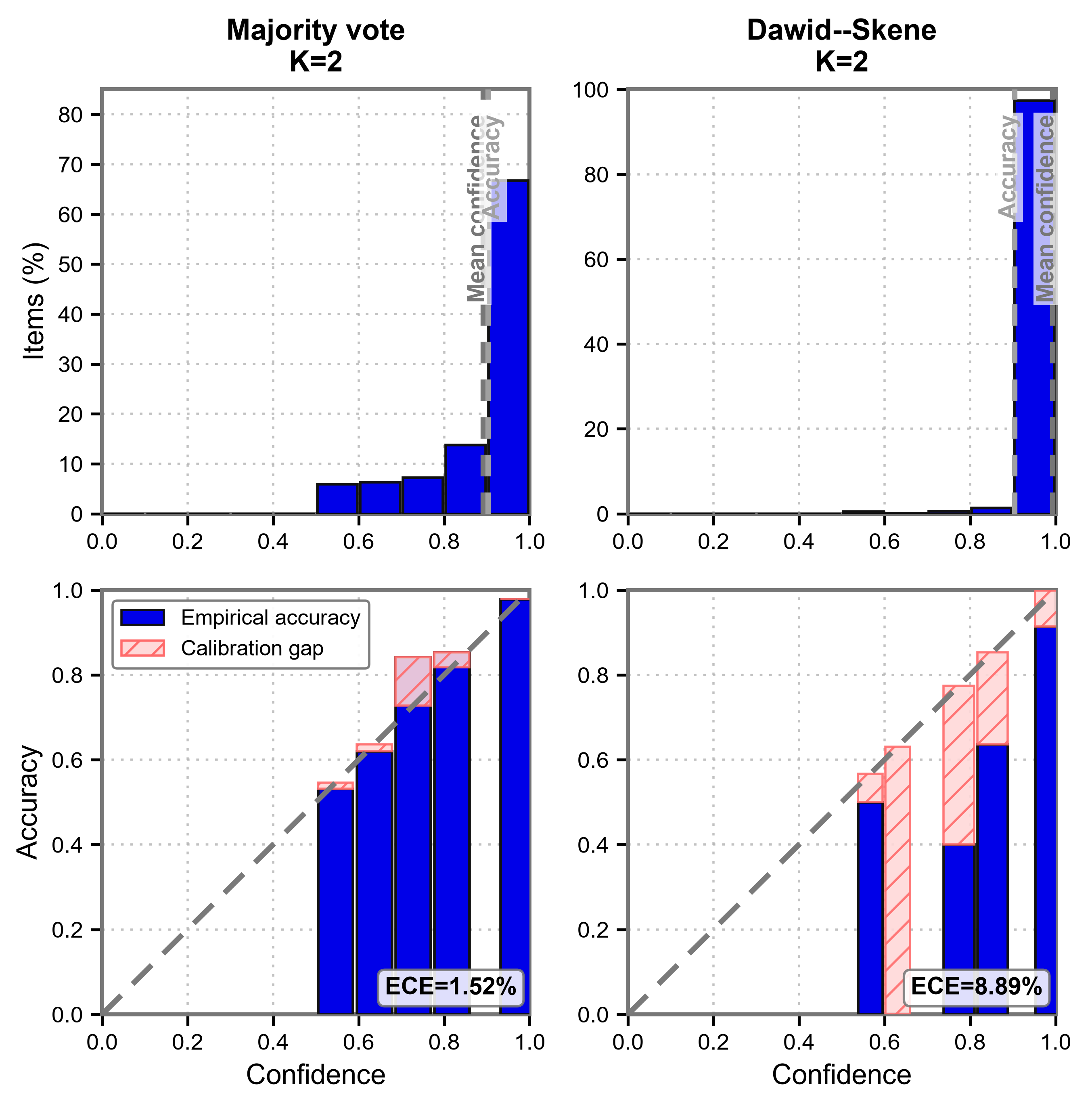}
        \caption*{\textbf{(b)} RTE dataset - LLM annotated}
    \end{minipage}

    \caption{
        Confidence histograms (top) and reliability histograms(bottom) of Majority Vote, DS aggregators on \textbf{(a)} the RTE human crowd dataset and on the \textbf{(b)} RTE LLM crowd dataset.
    }
    \label{fig:calibration_mv_ds}
\end{figure*}

Both for human and LLM annotators, Majority Vote shows underconfidence, while Dawid--Skene tend to be overconfident (Figure~\ref{fig:calibration_mv_ds}). On RTE dataset and the LLM crowd dataset, Dawid–Skene reaches respectively ECE of $6.8\%$ and $8.9\%$, while MV reaches respectively ECE of $16.9\%$ and $1.5\%$.

\subsection{Temperature Scaling and Soup}
\textbf{In our
experiments}, we use
\(
\mathcal R = \{\mathrm{MV},\mathrm{DS}\}
\)
because the empirical analysis in Figure~\ref{fig:calibration_mv_ds} reveals a clear
complementarity between Majority Vote and Dawid--Skene: Majority Vote tends
to be under-confident, whereas Dawid--Skene tends to be over-confident. 
Therefore, in practice with our algorithm \ref{alg:general-Soup-ts}, we set
\[
Q_i^{\mathrm{Soup},\phi}
:=
\frac{1}{3}
\left(
Q_i^{\mathrm{MV}}
+
Q_i^{\mathrm{DS}}
+
Q_i^\phi
\right).
\]

\begin{table*}[!hbtp]
\centering
\scriptsize
\setlength{\tabcolsep}{4pt}
\begin{tabular}{lcrrrrrrrrrr}
\toprule

&&\multicolumn{9}{c}{\textbf{Weighted  vote aggregation methods}} \\[1mm]

& &
\multicolumn{3}{c}{\cellcolor{blue!10}\textbf{MV}}
& \multicolumn{3}{c}{\cellcolor{blue!10}\textbf{Wawa}}
& \multicolumn{3}{c}{\cellcolor{blue!10}\textbf{ZBS}} \\

\cmidrule(lr){3-5}
\cmidrule(lr){6-8}
\cmidrule(lr){9-11}

Dataset regime & $N$
& MV & TS & Soup+TS
& Wawa & TS & Soup+TS
& ZBS & TS & Soup+TS \\
\midrule

Human datasets
& 39
& \cellcolor{blue!4}11.98
& \cellcolor{blue!4}10.64
& \cellcolor{blue!4}\textbf{8.48}
& \cellcolor{blue!4}10.67
& \cellcolor{blue!4}10.45
& \cellcolor{blue!4}\textbf{8.24}
& \cellcolor{blue!4}10.68
& \cellcolor{blue!4}10.39
& \cellcolor{blue!4}\textbf{8.20} \\

LLM crowd & 1
& \cellcolor{blue!4}9.24
& \cellcolor{blue!4}\textbf{5.88}
& \cellcolor{blue!4}7.30
& \cellcolor{blue!4}9.22
& \cellcolor{blue!4}\textbf{6.11}
& \cellcolor{blue!4}7.27
& \cellcolor{blue!4}8.86
& \cellcolor{blue!4}\textbf{6.11}
& \cellcolor{blue!4}7.27 \\

\midrule
&&\multicolumn{9}{c}{\textbf{Model-based aggregation methods}} \\[1mm]

& &
\multicolumn{3}{c}{\textbf{DS}}
& \multicolumn{3}{c}{\textbf{GLAD}}
& \multicolumn{3}{c}{\textbf{CrowdFM}} \\

\cmidrule(lr){3-5}
\cmidrule(lr){6-8}
\cmidrule(lr){9-11}

Dataset regime & $N$
& DS & TS & Soup+TS
& GLAD & TS & Soup+TS
& CrowdFM & TS & Soup+TS \\
\midrule

Human datasets
& 39
& 13.58 & \textbf{5.88} & 7.17
& 13.69 & \textbf{6.06} & 6.27
& 12.50 & \textbf{4.26} & 4.62 \\

LLM crowd & 1
& 17.13
& 12.02
& \textbf{7.31}
& 14.12
& 8.94
& \textbf{7.79}
& 11.87
& \textbf{6.74}
& 7.23 \\
\bottomrule
\end{tabular}
\caption{Expected Calibration Error (ECE) scores.}
\label{tab:raw_ts_Soup_ts_groups}
\end{table*}

\begin{figure*}[h!]
    \centering

    \begin{subfigure}[htbp]{0.48\textwidth}
        \centering
        \includegraphics[width=\linewidth]{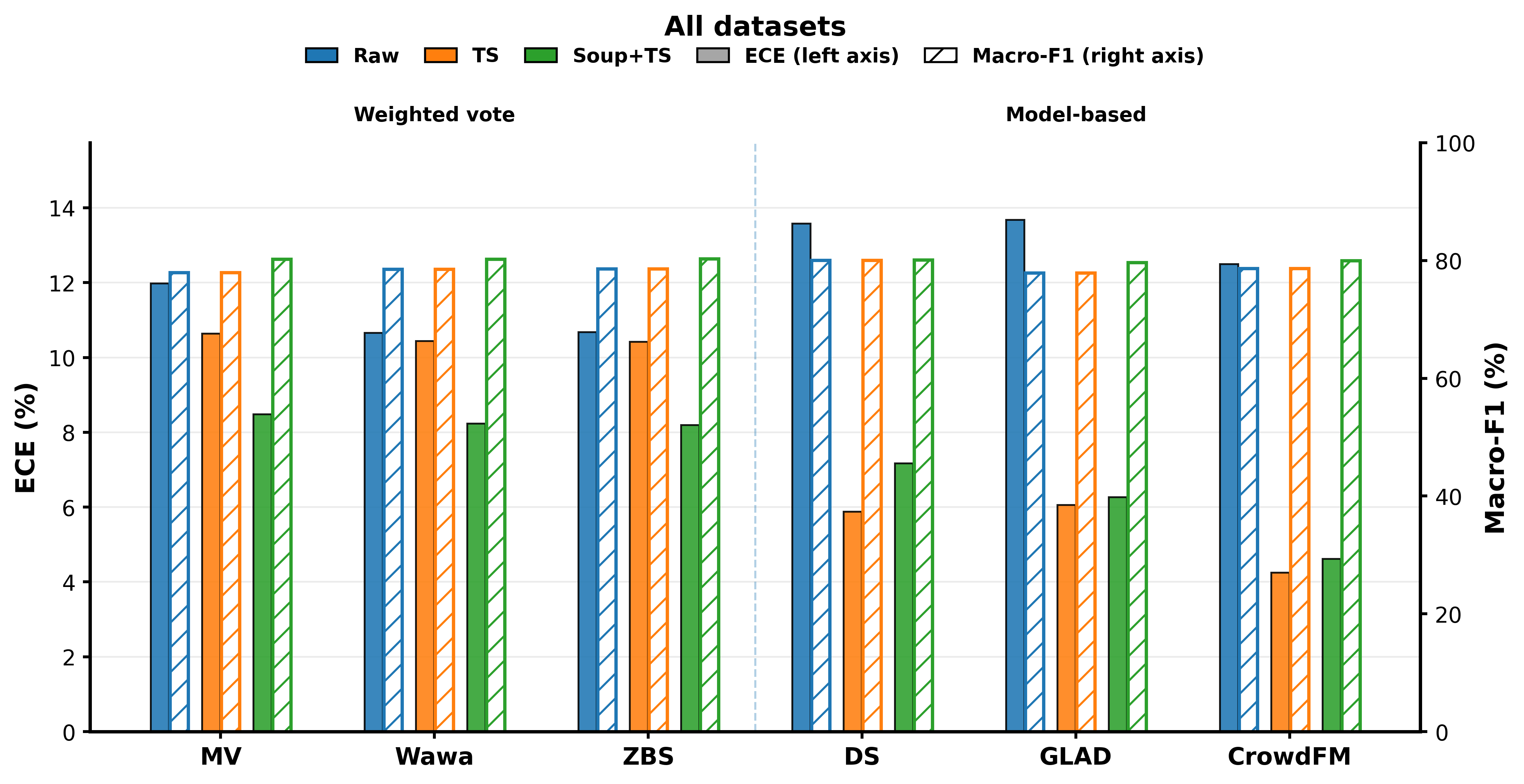}
        \caption{Human datasets}
        \label{fig:ece_f1_all_datasets}
    \end{subfigure}
    \hfill
    \begin{subfigure}[htbp]{0.48\textwidth}
        \centering
        \includegraphics[width=\linewidth]{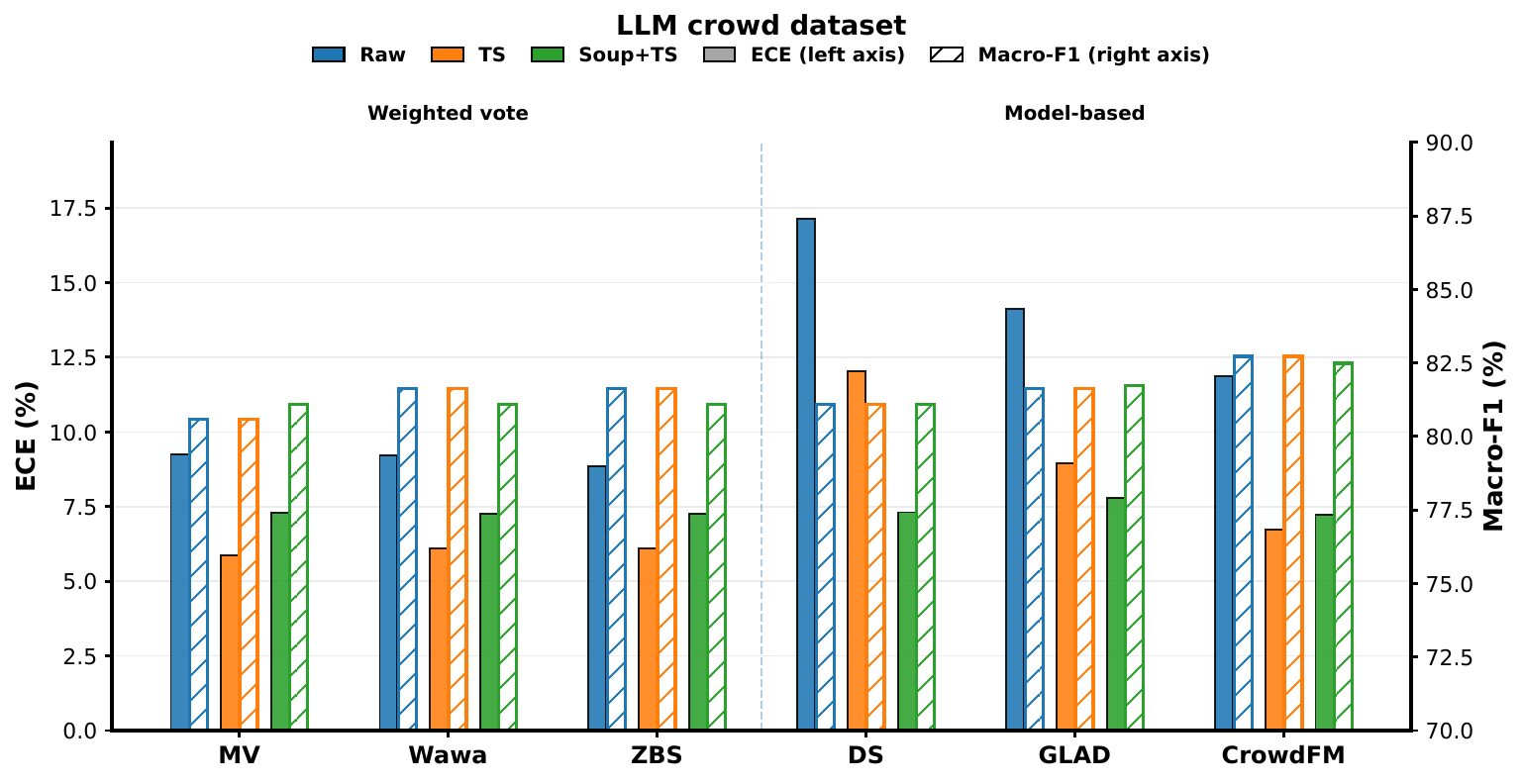}
        \caption{LLM crowd dataset}
        \label{fig:ece_f1_llm}
    \end{subfigure}

    \caption{\label{fig:ece_f1_comparison} ECE and Macro-F1 comparison for Raw, TS, and Soup+TS for human and llm datasets.}
    
\end{figure*}

\paragraph{Results}
Table~\ref{tab:raw_ts_Soup_ts_groups} shows  improvements in terms of ECE
in several settings. On human datasets, for weighted majority votes, our
Soup+TS  systematically decreases the ECE compared to
classic temperature scaling, while for model-based aggregators, temperature scaling
alone remains slightly better. On the LLM-crowd dataset, Dawid--Skene and GLAD are
the aggregators that benefit most from our method, and Soup+TS reaches a similar
ECE regardless which is the base aggregator.  
In terms of F1 (Figure~\ref{fig:ece_f1_comparison}), Soup is equal to or better than the baseaggregators on human datasets, while on the LLM crowd it remains comparable, with
a slight loss for some aggregators. Our method therefore improves calibration
while preserving, and on human datasets often improving, the accuracy of the base
aggregators.
\section{Conclusion}
\paragraph{Contributions}
We built a theoretical framework for studying the calibration of aggregation methods in crowdsourcing and showed, within this framework, that under fairly general assumptions (referred to as conditions $\mathcal G$ in the paper), weighted majority votes are very difficult to calibrate; perfect calibration would require annotators to always be correct. In contrast, we established that Bayesian voting is theoretically perfectly calibrated, regardless of the number of annotators. This theoretical analysis is confirmed empirically: by evaluating calibration loss on 39 datasets of human annotations, as well as on one dataset annotated by LLMs, we show that in practice, all commonly used aggregators are poorly calibrated, including aggregators that do not satisfy conditions $\mathcal G$. We then turned to the standard calibration method, temperature scaling, and observed that it fails in several settings. To address this, we proposed combining several aggregators prior to the temperature scaling step, and showed experimentally that this combination improves both calibration and accuracy in many cases. The main contribution of this paper is thus to introduce a new theoretical framework for analyzing this problem, and to highlight, both theoretically and experimentally, the difficulty of calibrating aggregation methods in crowdsourcing.
\paragraph{Limitations}
Our study has several limitations, both theoretical and experimental. On the theoretical side, our framework relies on conditions $\mathcal{G}$ as well as on an assumption of conditional independence between annotators; it remains to be studied what happens to weighted majority votes beyond these conditions, and when this independence assumption is dropped. More broadly, the nature of the data and the dependencies it induces would deserve further attention, both to better design aggregators and to better understand how to calibrate them. On the experimental side, only temperature scaling was studied as a calibration technique, and a more thorough study of calibration methods for aggregators is left for future work.  
Furthermore, while our conclusions on human annotators rely on a large number of datasets (39), they are based on two  datasets in the case of LLM annotators, which limits the scope of our findings in that setting. Finally, our aggregator-combination method remains an empirical solution: it improves results in some but not all cases, with no theoretical guarantee of calibration or full account of the failure cases.
\subsection*{Reproducibility statement}

(This section is \textbf{recommended} and does not count toward the page limit.)

It is important that the work published in ICLR is reproducible. Authors are
strongly encouraged to include a paragraph-long Reproducibility Statement at the
end of the main text (before references) to discuss the efforts that have been
made to ensure reproducibility. This paragraph should not itself describe
details needed for reproducing the results, but rather reference the parts of
the main paper, appendix, and supplemental materials that will help with
reproducibility. For example, for novel models or algorithms, a link to an
anonymous downloadable source code can be submitted as supplementary materials;
for theoretical results, clear explanations of any assumptions and a complete
proof of the claims can be included in the appendix; for any datasets used in
the experiments, a complete description of the data processing steps can be
provided in the supplementary materials. Each of the above are examples of
things that can be referenced in the reproducibility statement.



\bibliographystyle{plainnat}
\bibliography{reference}

\section{Appendix}
\subsection{Proofs for Section~\ref{sectionweightedtheory}}\label{app:proofs}

\PropBayesCalibrated*
\begin{proof}
Let $k\in [K],$ and $(Y_1,...,Y_R)$ denotes $Z_{k}:=\mathbb{I}\{Y=k\}$
Apply Proposition 1 of \citet{pmlr-v89-vaicenavicius19a} with 
$X:=Z_k$ and $Y:=(Y_1,...,Y_R)$ and $Z:=\phi(Y_1,...,Y_R)$ and this gives you that $\phi^{\star}$ is calibrated in the full distribution sense. 

Assume that $\phi^{\star}$ is calibrated in the full distribution, it is clear that it is calibrated in the $L^p$ sens (an almost zero random variable has an $L^p$ norm null).
And of course if you are calibrated for any $k\in [K]$ you are in particular calibrated for the one who achieve the max, so you are calibrated in the top$k$ class sens.
\end{proof}

\PropGNonempty*
\begin{proof}
\ 
\begin{enumerate}
    \item \textbf{Majority Vote:} Set for all $r\in [R]:$ $\omega_{r}=\frac{1}{R}$ this defines the classical majortity vote $MV$. 
    So $\sum_{1\leq r\leq R}\omega_r=1$ and 
    
    $$\sum_{1\leq r\leq R}\omega_{r,R}^2=\frac{1}{R}\to 0.$$
 Set $p_{k,j,\infty}:=\lim_{R\to \infty}\frac{1}{R}\sum_{1\leq r\leq R}\mathbb{P}\left(Y_r=k\mid Y=j\right)$. So the conclusion follows
    By Hoefdding inequalitie we have that 
    $$  \Phi_R \mid Y=j
\xrightarrow[R\to\infty]{\mathbb{P}}
A^{(j)}:=(p_{k,j,\infty})_{k\in [K]}.$$

This shows that   $MV\in \mathcal{G}.$
\item\textbf{Wawa:}
At the population
level, the skill of annotator $r$ is defined as
\[
s_r^{(R)}
:=
\mathbb{P}(Y_r=Y^{MV}_R).
\]
The corresponding normalized weight is
\(
\omega_r^{(R)}
=
\frac{s_r^{(R)}}{\sum_{j=1}^R s_j^{(R)}},
\)
and Wawa is therefore the weighted majority vote
\[
\Phi_R^{\mathrm{Wawa}}
=
\left(
\sum_{r=1}^R
\omega_r^{(R)}
\mathbb{I}\{Y_r=k\}
\right)_{k\in[K]}.
\]
Using the fact that for any $r,R:$ $\mathbb{P}(Y_r=Y_{R}^{MV})^2\leq \mathbb{P}(Y_r=Y_{R}^{MV}),$
\begin{enumerate}
    \item $$\sum_{1\leq r\leq R}(\omega_{r}^{(R)})^2=\frac{1}{\left(\sum_{r'=1}^R s_{r'}^{(R)}\right)^{2}}\sum_{r}\mathbb{P}(Y_{r}=Y_{R}^{MV})^2\leq \frac{1}{\sum_{r'=1}^R s_{r'}^{(R)}}$$
But: 
$$\sum_{r=1}^R s_r^{(R)}=\sum_{r=1}^R \mathbb{P}(Y_r=Y_{R}^{MV})=\sum_{r=1}^R \mathbb{E}\mathbb{I}(Y_r=Y_{R}^{MV})=\mathbb{E}\sum_{r=1}^R \mathbb{I}(Y_r=Y_{R}^{MV})$$
But notice that:
$\sum_{r=1}^R \mathbb{I}(Y_r=Y_{R}^{MV})=\max_k \sum_{r=1}^R \mathbb{I}(Y_r=k)$
finally by the pigeonhole principle,
\[
\max_{k\in[K]}
\sum_{r=1}^R \mathbb{I}\{Y_r=k\}
\geq \frac{R}{K}.
\]
This leads to: 
$$\sum_{1\leq r\leq R}(\omega_{r}^{(R)})^2\leq \frac{K}{R}\to 0.$$
Then we use the fact that: 
\begin{equation}
    \text{Var}(X)=0\implies \exists c\in \mathbb{R}\quad \mathbb{P}\left(X=c\right)=1
\end{equation}
Let $k,j:$
\[
\operatorname{Var}\left(
\Phi^{\mathrm{Wawa}}_{R,j}\mid Y=k
\right)
=
\sum_{r=1}^{R}
(\omega_r^{R})^2
\operatorname{Var}\left(
\mathbb{I}\{Y_r=j\}\mid Y=k
\right)
\leq
\frac{1}{4}
\|\omega^{(R)}\|_2^2
\longrightarrow 0.
\]

\end{enumerate}
\end{enumerate}

\end{proof}
\ThmWMVCalibration*

\begin{lemma}\label{forapplicationofpowerprop}
Assume that, for every $j\in[K]$,
\[
\Phi_R \mid Y=j
\xrightarrow[R\to\infty]{\mathbb{P}}
A^{(j)},
\]
where $A^{(1)},\ldots,A^{(K)}\in\Delta_K$ are deterministic and 
\(
j\neq \ell
\quad\Longrightarrow\quad
A^{(j)}\neq A^{(\ell)}.
\) Then $\Phi:=\lim_R \Phi_R=A^{Y}$ and one can find a function $h$ such that $Y=h(\Phi).$
\end{lemma}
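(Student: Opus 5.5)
The plan is to make the statement ``$\Phi:=\lim_R\Phi_R=A^{Y}$'' precise as unconditional convergence in probability, $\Phi_R\xrightarrow{\mathbb P}A^{Y}$. I would first prove that, and then build $h$ explicitly from the fact that the $A^{(j)}$ are distinct.

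\textbf{Step 1: from conditional to unconditional convergence.} Fix $\varepsilon>0$ and decompose by the law of total probability over the finitely many values of $Y$:
\[
\mathbb P\bigl(\|\Phi_R-A^{Y}\|>\varepsilon\bigr)=\sum_{j\in[K]}\mathbb P(Y=j)\,\mathbb P\bigl(\|\Phi_R-A^{(j)}\|>\varepsilon\mid Y=j\bigr).
\]
On the event $\{Y=j\}$ we have $A^{Y}=A^{(j)}$, which is deterministic. By the hypothesis, each conditional probability tends to $0$. Classes with $\mathbb P(Y=j)=0$ contribute nothing, so I do not need to define the conditional law there. Since the sum is finite, the whole expression tends to $0$.

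Hence $\Phi_R\to A^{Y}$ in probability. The limit is unique almost surely, because limits in probability are a.s.\ unique. This is the sense in which $\Phi$ is ``uniquely defined'' as $A^{Y}$. If an almost sure statement is wanted instead, I would pass to a subsequence; that is not needed for the identification.

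\textbf{Step 2: construction of $h$.} Define $h:\Delta_K\to[K]$ by $h(a)=j$ if $a=A^{(j)}$, and $h(a)=0$ otherwise. The map is well defined because $j\neq\ell\Rightarrow A^{(j)}\neq A^{(\ell)}$, so $j\mapsto A^{(j)}$ is injective on the finite set $[K]$. It is Borel measurable, being a finite sum of indicators of singletons. Then $h(\Phi)=h(A^{Y})=Y$ identically on the sample space, and in any case almost surely.

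For the later use in Theorem~\ref{thm:wmv-calibration}, I would also record two consequences. First, $\sigma(\Phi)=\sigma(Y)$. Second, $\mathbb E[Z\mid\Phi]=Z$, so the limit aggregator is calibrated if and only if $A^{(j)}=e_j$ for every $j$.

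\textbf{Main obstacle.} The delicate point is interpretive rather than technical: ``$\lim_R\Phi_R$'' has to be given a meaning. I would commit to convergence in probability, as in Step 1. One further point should be made explicit if needed: the limit $A^{Y}$ depends on the joint law only through $Y$. This is automatic here because the $A^{(j)}$ are deterministic, which is exactly why the assumption of deterministic limits matters.
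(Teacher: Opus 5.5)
Your proof is correct and follows the paper's: Step 1 is the same total-probability decomposition over the finitely many values of $Y$, and your $h$ is the same left inverse of the injective map $j\mapsto A^{(j)}$. The only difference is that the paper obtains this map as the pointwise limit of $g_R(j)=\mathbb E[\Phi_R\mid Y=j]$, a detour that relies on the boundedness of $\Phi_R$ to turn convergence in probability into convergence of means, whereas you write the map down directly and also check measurability, which is cleaner.
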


\begin{proof}
First let us prove that $\Phi=A^{Y}.$
Let $\varepsilon>0$ and $R\geq 1,$
$$\mathbb{P}\left( \left | \Phi_R-A^{Y} \right |\geq \varepsilon\right)=\sum_{k\in [K]}\mathbb{P}\left(Y=k\right)\mathbb{P}\left( \left | \Phi_R-A^{Y} \right |\geq \varepsilon\mid Y=k\right)$$
Since $K$ does not  depends on $R$ we pass to the limit and we have that 
\begin{equation}\label{phiA}
    \Phi=A^{Y}
\end{equation}

By the Doob–Dynkin lemma there exists for any $R$ a function $g_{R}$ such that $g_R(Y):=\mathbb{E}\left(\Phi_R\mid Y\right).$
Using the equivalence of convergence in probability and in law when the limit is non random, the assumtion says: 
$$\lim_{R\to \infty}\mathbb{E}\left(\Phi_R\mid Y\right)=A^{(j)}\Longleftrightarrow \lim_{R\to \infty}g_R(j)=A^{(j)}$$
This shows that point wise limit $g:=\lim_{R\to \infty}g_R$ satisfies: 
$$g(j)=A^{j}.$$
Clearly $g$ is injective, since if $g(j)=g(i)$ so by definition $A^{(j)}=A^{(i)}$ and so $i=j.$
So there exists left inverse $h$ such that for any $j\in [K]$
\begin{equation}\label{jh}
    j=h(\Phi).
\end{equation}

Using~\eqref{phiA},
$$\Phi=A^{Y}\implies h(\Phi)=h(A^{Y})\underbrace{\implies}_{\eqref{jh}} h(\Phi)=Y.$$

\end{proof}

\begin{lemma}\label{limitePhi}
Let $Z=e_Y$ be the one-hot encoded ground-truth vector, and let
$(\Phi_R)_{R\geq 1}$ be a sequence of weighted majority votes.
Assume that, for every $j\in[K]$ such that $\mathbb{P}(Y=j)>0$,
\[
\Phi_R \mid Y=j
\xrightarrow[R\to\infty]{\mathbb{P}}
A^{(j)},
\]
where $A^{(1)},\ldots,A^{(K)}\in\Delta_K$ are deterministic and 
\(
j\neq \ell
\quad\Longrightarrow\quad
A^{(j)}\neq A^{(\ell)}.
\)
Define
\(
\Phi:=A^{(Y)}.
\)
Then, for every $j,k\in[K]$, conditionnaly on $Y=j$
\begin{equation}\label{objectif}
\mathbb{P}\!\left[Y=k\mid\Phi_R\right]
\xrightarrow[R\to\infty]{\mathbb{P}}
\mathbb{P}\!\left[Y=k\mid\Phi\right].
\end{equation}

\end{lemma}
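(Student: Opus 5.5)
The plan is to identify the limit on the right-hand side of~\eqref{objectif} explicitly, and then prove convergence in $L^1$ on the event $\{Y=j\}$; this implies convergence in probability conditionally on $Y=j$. By Lemma~\ref{forapplicationofpowerprop}, $Y=h(\Phi)$, so $Y$ is $\sigma(\Phi)$-measurable. Hence $\mathbb{P}[Y=k\mid\Phi]=\mathbb{I}\{Y=k\}$, and on $\{Y=j\}$ this equals $\delta_{k,j}$. It therefore suffices to show that, for each $j$ with $\pi_j:=\mathbb{P}(Y=j)>0$ and each $k\neq j$,
\[
\mathbb{E}\!\left[\mathbb{P}[Y=k\mid\Phi_R]\mid Y=j\right]\xrightarrow[R\to\infty]{}0 .
\]
Indeed, a nonnegative variable whose conditional mean tends to $0$ tends to $0$ in conditional probability. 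The case $k=j$ then follows because $\mathbb{P}[Y=j\mid\Phi_R]=1-\sum_{k\neq j}\mathbb{P}[Y=k\mid\Phi_R]$ and $K$ is fixed.

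\textbf{Separating the limits.} Since the $A^{(\ell)}$ are pairwise distinct and finitely many, I fix $\varepsilon>0$ smaller than half their minimal pairwise distance. I then let $B_\ell$ be the open ball of radius $\varepsilon$ around $A^{(\ell)}$, so the balls $B_\ell$ are pairwise disjoint. By the hypothesis, $\mathbb{P}(\Phi_R\notin B_\ell\mid Y=\ell)\to0$ for every $\ell$ with $\pi_\ell>0$.

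\textbf{Key bound via the tower property.} For $k\neq j$, split on whether $\Phi_R\in B_j$ and use $\mathbb{P}[Y=k\mid\Phi_R]\le 1$:
\[
\mathbb{E}\!\left[\mathbb{P}[Y=k\mid\Phi_R]\,\mathbb{I}\{Y=j\}\right]
\le \mathbb{E}\!\left[\mathbb{P}[Y=k\mid\Phi_R]\,\mathbb{I}\{\Phi_R\in B_j\}\right]+\mathbb{P}(Y=j,\ \Phi_R\notin B_j).
\]
The indicator $\mathbb{I}\{\Phi_R\in B_j\}$ is $\sigma(\Phi_R)$-measurable, so the tower property turns the first term into $\mathbb{P}(Y=k,\ \Phi_R\in B_j)$. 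Since $B_j\cap B_k=\emptyset$, this is at most $\pi_k\,\mathbb{P}(\Phi_R\notin B_k\mid Y=k)$, which tends to $0$ (it is trivially $0$ if $\pi_k=0$). The second term equals $\pi_j\,\mathbb{P}(\Phi_R\notin B_j\mid Y=j)$, which also tends to $0$. Dividing by $\pi_j>0$ gives the required $L^1$ convergence conditional on $Y=j$, and Markov's inequality then yields~\eqref{objectif}.

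\textbf{Main obstacle.} The delicate point is avoiding any pointwise control of the posterior $\mathbb{P}[Y=k\mid\Phi_R=\varphi]$, which could behave badly on low-probability atoms of $\Phi_R$. The tower-property trick sidesteps this by working only with averages over events measurable with respect to $\Phi_R$. The only genuinely needed inputs are the disjointness of the $B_\ell$, which comes from distinctness of the $A^{(\ell)}$, and the identification $\mathbb{P}[Y=k\mid\Phi]=\mathbb{I}\{Y=k\}$ from Lemma~\ref{forapplicationofpowerprop}.
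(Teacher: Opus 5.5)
Your proof is correct, and its core step differs from the paper's. Both arguments use the same pairwise disjoint balls $B_\ell$ around the $A^{(\ell)}$, and the same identification $\mathbb{P}[Y=k\mid\Phi]=\mathbb{I}\{Y=k\}$ via Lemma~\ref{forapplicationofpowerprop}. In between, the paper runs an $L^2$-projection argument. Since $\eta_{k,R}=\mathbb{E}[Z_k\mid\Phi_R]$ is the best $\sigma(\Phi_R)$-measurable predictor of $Z_k$, comparing it with the plug-in $\mathbb{I}\{Y^\phi_R=k\}$ gives $\mathbb{E}[(Z_k-\eta_{k,R})^2]\le\mathbb{P}(Y^\phi_R\neq Y)$. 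The paper then argues that this misclassification probability vanishes, which gives unconditional $L^2$ convergence before restricting to $\{Y=j\}$. You instead apply the tower property to the $\sigma(\Phi_R)$-measurable event $\{\Phi_R\in B_j\}$ and get, for $k\neq j$, the explicit bound
\[
\mathbb{E}\left[\mathbb{P}[Y=k\mid\Phi_R]\mid Y=j\right]\le\frac{\mathbb{P}(Y=k,\ \Phi_R\notin B_k)}{\pi_j}+\mathbb{P}(\Phi_R\notin B_j\mid Y=j),
\]
with no classifier involved.

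This difference matters. The paper's step $\mathbb{P}(Y^\phi_R\neq Y\mid Y=k)\le\mathbb{P}(\Phi_R\notin B_k\mid Y=k)$ tacitly requires that $\Phi_R\in B_k$ force the largest coordinate of $\Phi_R$ to be the $k$-th, and the hypotheses do not give this. For $K=2$, a crowd that systematically reports the other label has $A^{(j)}=0.2\,e_j+0.8\,e_{1-j}$. These limits are pairwise distinct, yet the argmax is always wrong. The projection route only goes through once the argmax is replaced by the ball decoder $\widehat Y_R:=\ell$ on $\{\Phi_R\in B_\ell\}$, whereas your argument needs nothing beyond disjointness of the balls. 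What the projection route offers is a conceptual link: the posterior's $L^2$ error is dominated by the error of any $\sigma(\Phi_R)$-measurable classifier. Your conditional bounds still recover the same unconditional $L^2$ convergence after averaging over $j$, since every quantity lies in $[0,1]$.
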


\begin{proof}
 Since the vectors \(A^{(1)},\ldots,A^{(K)}\) are pairwise distinct, define 
\(
d_{\min}
:=
\min_{j\neq \ell}
\left\|
A^{(j)}-A^{(\ell)}
\right\|_1
>0.
\)
Let \(0<\delta<d_{\min}/2\), and define
\(
B_j
:=
B\!\left(A^{(j)},\delta\right).
\)
Since $\delta <\frac{d}{2}$, if $x\in B_j\cap B_k$ for $j\neq k$, we have $d(A^{(j)};A^{(k)})\leq d(A^{(k)},x)+d(A^{(j)},x)\leq \delta +\delta< d:=\text{min}_{(i,j)}d(A^{j},A^{i})$ which is not possible so 
$B_{j}\cap B_k$ is empty, when $k\neq j$
Since the limits
$A^{(1)},\ldots,A^{(K)}$ are pairwise distinct, the conditional
distributions of $\Phi_R$ given the different classes become
asymptotically separated. 

$$\mathbb{P}\left(Y^{\phi}_R\neq Y\right)=\sum_{k}\pi_k\mathbb{P}(Y^{\phi}_R\neq Y\mid Y=k)\leq \sum_{k}\pi_k\mathbb{P}(\Phi_R\notin B_k\mid Y=k).$$
Let $\epsilon >0$, for $R$ large enough, let $R_0$ since $(\Phi_R \mid Y=k)_R$ converges in probability to $A^{(k)}$ we know that:
$$R\geq R_0\implies \mathbb{P}\left(\Phi_R \notin B_{k}\mid Y=k\right)\leq \varepsilon.$$
Let $ Y^{\phi}_R:=\argmax_k(\Phi_R)_k$ we have $$R\geq R_0\implies \mathbb{P}\left(Y^{\phi}_R\neq Y\right)\leq \sum_{k}\pi_k\mathbb{P}(\Phi_R\notin B_k\mid Y=k)\leq \sum_{k}\pi_k\varepsilon=\varepsilon. $$
Now we will use a projection argument to conclude.
Let \(
\eta_{k,R}
:=
\mathbb{P}(Y=k\mid \Phi_R)
=
\mathbb{E}[Z_k\mid \Phi_R].
\)
On can see that
$\eta_{k,R}$ is the $L^2$-projection of $Z_k$ onto $\sigma(\Phi_R)$ and thus by the optimality of it we have
\[
\mathbb{E}\left[
\left(
Z_k-\eta_{k,R}
\right)^2
\right]
\leq
\mathbb{E}\left[
\left(
Z_k-\mathbb{I}\{ Y^{\phi}_R=k\}
\right)^2
\right].
\]

But
\(
\left(
Z_k-\mathbb{I}\{Y^{\phi}_R=k\}
\right)^2
\leq
\mathbb{I}\{Y_R^{\phi}\neq Y\}.
\), therefore
\[
\mathbb{E}\left[
\left(
Z_k-\eta_{k,R}
\right)^2
\right]
\leq
\mathbb{P}(\widehat Y_R\neq Y)
\xrightarrow[R\to\infty]{}0.
\]

Therefore,
\(
\eta_{k,R}
\xrightarrow[R\to\infty]{L^2}
Z_k,
\)
and in particular
\(
\eta_{k,R}
\xrightarrow[R\to\infty]{\mathbb{P}}
Z_k.
\)

Now fix $j\in[K]$ such that $\pi_j>0$. Since $Z_k=\mathbb{I}\{Y=k\}$, under
the conditional law given $Y=j$ we have
\[
Z_k=\mathbb{I}\{k=j\}.
\]
Thus,
\[
\mathbb{P}(Y=k\mid \Phi_R)\mid Y=j
\xrightarrow[R\to\infty]{\mathbb{P}}
\mathbb{I}\{k=j\}.
\]

Finally by Lemma~\ref{forapplicationofpowerprop}, $\Phi:=\lim_R \Phi_R=h(Y)$
Consequently,
\[
\mathbb{E}[Z_k\mid \Phi]=\mathbb{E}[Z_k\mid h(Y)]
=
Z_k.
\]
and therefore, conditionally on $Y=j$,
\[
\mathbb{E}[Z_k\mid \Phi]
=
\mathbb{I}\{k=j\}.
\]
\end{proof}

\begin{proof}[Proof of the direct implication of Theorem~\ref{thm:wmv-calibration}]
We prove here the direct sense. Assume that~\eqref{calibMVasymptotic} is satisfied.
Let $j,k\in [K]$ and $p\geq 1$
\begin{align*}
\left|
\delta_{k,j}
-
\sum_{r=1}^{R}
\omega_r\,
\mathbb{P}\!\left(Y_r=k\mid Y=j\right)
\right|^p
&=
\left|
\mathbb{E}\!\left[
(Z-\Phi_R)_k
\mid Y=j
\right]
\right|^p
\\
&\leq
\mathbb{E}\!\left[
\left|(Z-\Phi_R)_k\right|^p
\mid Y=j
\right]
\\
&\leq
\mathbb{E}\!\left[
\left\|
Z-\Phi_R
\right\|_p^p
\mid Y=j
\right].
\end{align*}
So, it suffices to show that 
\begin{equation}\label{liminulle}
    \lim_{R\to \infty} \max_{j}\mathbb{E}\left[
\left\|
Z -
\Phi_R
\right\|_p^p\mid Y=j
\right]=0.
\end{equation}
Let $R\geq 0$, notice that using total probability theorem for the expectation:
\begin{equation}\label{A}
   \operatorname{CE}_{p}(\Phi_R)^p= \mathbb{E}\left[
\left\|
\mathbb{E}\!\left[
Z \mid \Phi_R
\right]
-
\Phi_R
\right\|^p
\right]=\sum_{j\in [K]}\mathbb{P}(Y=j)\mathbb{E}\left[
\left\|
\mathbb{E}\!\left[
Z \mid \Phi_R
\right]
-
\Phi_R
\right\|^p\mid Y=j
\right]
\end{equation}

Applying Lemma~\ref{limitePhi} we can pass to the limit  as $R\to +\infty$ in~\eqref{A}, and we have that: 
$$0=\sum_{j\in [K]}\mathbb{P}(Y=j)\mathbb{E}\left[
\left\|
\mathbb{E}\!\left[
Z-
\Phi \mid \Phi
\right]
\right\|^p\mid Y=j
\right].$$
And using Lemma~\ref{forapplicationofpowerprop} we can apply double conditionning since $\sigma(Y)\subset \sigma (\Phi)$
$$0=\sum_{j\in [K]}\mathbb{P}(Y=j)\mathbb{E}\left[
\left\|
Z-
\Phi
\right\|^p\mid Y=j
\right]$$
Since for all $j$ $\mathbb{P}(Y=j)>0$ we have that: 
$\mathbb{E}\left[
\left\|
Z -
\Phi
\right\|_p^p\mid Y=j
\right]=0.$
We finally can conclude that for any $j\in [K]:$
$$ \lim_{R\to \infty}\mathbb{E}\left[
\left\|
Z -
\Phi_R
\right\|_p^p\mid Y=j
\right]=\mathbb{E}\left[
\left\|
Z -
\Phi
\right\|_p^p\mid Y=j
\right]=0.$$

Now let us prove the converse implication.
\end{proof}

\begin{lemma}\label{lem:bounded-conv-Lq}
Let $(X_R)_{R\ge1}$ be a sequence of real random variables such that $X_R \to X$ in probability and $|X_R|\le M$ a.s.\ for all $R$, for some deterministic $M>0$. Then $|X|\le M$ a.s.\ and $\mathbb{E}|X_R-X|^q \to 0$ for every $q\in[1,\infty)$.
\end{lemma}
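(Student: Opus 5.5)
The statement to prove is Lemma~\ref{lem:bounded-conv-Lq}. The plan has two parts. First, pass the almost-sure bound to the limit through an almost surely convergent subsequence. Second, obtain the $L^q$ convergence by splitting the expectation on the event where $|X_R-X|$ is small and its complement, and using boundedness on the complement. This is the standard bounded-convergence argument, and no earlier result of the paper is needed.

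\textbf{Step 1: $|X|\le M$ a.s.} Since $X_R\to X$ in probability, there is a subsequence $(X_{R_n})_n$ with $X_{R_n}\to X$ almost surely. This is the classical Riesz extraction: pick $R_n$ with $\mathbb{P}(|X_{R_n}-X|>2^{-n})\le 2^{-n}$ and apply Borel--Cantelli. Take the intersection of the countably many full-measure events $\{|X_{R_n}|\le M\}$ with the convergence event. On this intersection, passing to the limit in $|X_{R_n}|\le M$ gives $|X|\le M$. Hence $|X|\le M$ almost surely.

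\textbf{Step 2: $L^q$ convergence.} Fix $q\in[1,\infty)$ and $\varepsilon\in(0,1)$. By Step 1, $|X_R-X|\le 2M$ almost surely. Splitting on the event $\{|X_R-X|\le\varepsilon\}$ and its complement gives
\[
\mathbb{E}|X_R-X|^q \le \varepsilon^q + (2M)^q\,\mathbb{P}\bigl(|X_R-X|>\varepsilon\bigr).
\]
The second term tends to $0$ as $R\to\infty$ by convergence in probability. Therefore $\limsup_R \mathbb{E}|X_R-X|^q\le\varepsilon^q$, and letting $\varepsilon\to0$ concludes.

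The only mildly delicate point is Step 1. One cannot pass the bound to the limit directly from convergence in probability without either the subsequence argument or the observation that $\mathbb{P}(|X|>M+\eta)\le\mathbb{P}(|X_R-X|>\eta)\to0$ for every $\eta>0$. I would actually use the latter one-line argument, letting $\eta\downarrow0$ along a countable sequence, since it avoids Borel--Cantelli altogether.

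This lemma is what will let the converse direction of Theorem~\ref{thm:wmv-calibration} upgrade the convergence in probability of $\mathbb{P}(Y=k\mid\Phi_R)$ and of $\Phi_R$ (both bounded by $1$) to convergence in $L^p$. That upgrade is what controls $\operatorname{CE}_p$.
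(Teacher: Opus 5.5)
Your proof is correct and is essentially the paper's argument. The paper obtains $|X|\le M$ a.s.\ from the same bound $\mathbb{P}(|X|\ge M+\varepsilon)\le\mathbb{P}(|X_R-X|\ge\varepsilon)\to0$ that you say you would actually use (rather than the subsequence extraction), and then gets $L^q$ convergence by the same split into $\{|X_R-X|<\varepsilon\}$ and its complement, bounding the second piece by $(2M)^q\,\mathbb{P}(|X_R-X|\ge\varepsilon)$.
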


\begin{proof}
For $\varepsilon>0$, $|X|\ge M+\varepsilon$ implies $|X_R-X|\ge\varepsilon$ since $|X_R|\le M$, so $\mathbb{P}(|X|\ge M+\varepsilon) \le \mathbb{P}(|X_R-X|\ge\varepsilon)\to0$. The left-hand side not depending on $R$, it must be $0$ for every $\varepsilon>0$, hence $|X|\le M$ a.s.

Now fix $\delta>0$ and choose $\varepsilon>0$ with $\varepsilon^q<\delta/2$. Splitting
\[
\mathbb{E}|X_R-X|^q = \mathbb{E}\big[|X_R-X|^q\mathbb{I}_{\{|X_R-X|<\varepsilon\}}\big] + \mathbb{E}\big[|X_R-X|^q\mathbb{I}_{\{|X_R-X|\ge\varepsilon\}}\big],
\]
the first term is $\le\varepsilon^q<\delta/2$, and the second is $\le(2M)^q\mathbb{P}(|X_R-X|\ge\varepsilon)$, which is $<\delta/2$ for $R$ large enough. Hence $\mathbb{E}|X_R-X|^q<\delta$ for $R$ large, and $\delta$ being arbitrary, $\mathbb{E}|X_R-X|^q\to0$.
\end{proof}

\begin{proof}[Proof of the converse implication of Theorem~\ref{thm:wmv-calibration}]
Assume that equation~\eqref{AsymtoticcalibrationMV} holds. Since $\Phi_{R,j} = \sum_{i=1}^R \omega_i \mathbb{I}\{Y_i=j\}$, this reads $\mathbb{E}[\Phi_R \mid Y=k] \to e_k$ for every $k\in[K]$.

By definition of $\mathcal{G}$, $\Phi_R \mid Y=k \xrightarrow{\mathbb P} A^{(k)}$ with $A^{(k)}$ deterministic. Each coordinate $\Phi_{R,j}$ is bounded by $1$, so Lemma~\ref{lem:bounded-conv-Lq} (with $q=1$, under $\mathbb{P}(\cdot\mid Y=k)$) gives $\mathbb{E}[\Phi_R\mid Y=k]\to A^{(k)}$. Comparing with the limit above and by uniqueness of the limit, $A^{(k)}=e_k$ for every $k\in[K]$.

Let $\varepsilon>0$. By the law of total probability, summing over the finitely many $k\in[K]$ and using $Z=e_k$ a.s.\ under $Y=k$, we get
\[
\mathbb{P}(\|\Phi_R-Z\|\ge\varepsilon) = \sum_{k\in[K]} \mathbb{P}(Y=k)\, \mathbb{P}(\|\Phi_R-e_k\|\ge\varepsilon \mid Y=k) \xrightarrow[R\to\infty]{} 0,
\]
each term vanishing by the previous paragraph. Hence $\Phi_R\to Z$ in probability, and in particular $\Phi_{R,k}\to Z_k$ in probability for every $k\in[K]$. Since $|\Phi_{R,k}-Z_k|\le1$ a.s., Lemma~\ref{lem:bounded-conv-Lq} yields
\[
\mathbb{E}|\Phi_{R,k}-Z_k|^q \xrightarrow[R\to\infty]{} 0, \qquad \forall q\in[1,\infty),\ \forall k\in[K]. \tag{1}
\]

Let $\eta_{k,R}:=\mathbb{E}[Z_k\mid\Phi_R]$, the $L^2$-orthogonal projection of $Z_k$ onto $\sigma(\Phi_R)$-measurable variables. By optimality of this projection, for any $\sigma(\Phi_R)$-measurable, square-integrable $W$, $\mathbb{E}[(Z_k-\eta_{k,R})^2] \le \mathbb{E}[(Z_k-W)^2]$. Applying this with $W=\Phi_{R,k}$ and using (1) with $q=2$,
\[
\mathbb{E}[(Z_k-\eta_{k,R})^2] \le \mathbb{E}[(Z_k-\Phi_{R,k})^2] \xrightarrow[R\to\infty]{} 0.
\]
By the triangle inequality in $L^2$, $\|\Phi_{R,k}-\eta_{k,R}\|_{L^2} \le \|\Phi_{R,k}-Z_k\|_{L^2} + \|Z_k-\eta_{k,R}\|_{L^2} \to 0$, using (1) and the line above. Hence
\[
\mathbb{E}[(\Phi_{R,k}-\eta_{k,R})^2] \xrightarrow[R\to\infty]{} 0, \qquad \forall k\in[K]. \tag{2}
\]

By (2) and Markov's inequality, $\Phi_{R,k}-\eta_{k,R}\to0$ in probability. Moreover $\eta_{k,R}\in[0,1]$ (a conditional expectation of an indicator) and $\Phi_{R,k}\in[0,1]$, so $|\Phi_{R,k}-\eta_{k,R}|\le2$ a.s. Lemma~\ref{lem:bounded-conv-Lq} (with $q=p$) then gives $\mathbb{E}|\Phi_{R,k}-\eta_{k,R}|^p \to 0$ for every $k\in[K]$. Summing over the $K$ coordinates,
\[
\operatorname{CE}_p(\Phi_R)^p = \mathbb{E}\big[\|\eta_R-\Phi_R\|_p^p\big] = \sum_{k=1}^K \mathbb{E}|\eta_{k,R}-\Phi_{R,k}|^p \xrightarrow[R\to\infty]{} 0,
\]
so $\operatorname{CE}_p(\Phi_R)\to0$ for every $p\ge1$, which proves the converse implication.
\end{proof}

\begin{corollary}\label{perf}
    For any $\phi \in \mathcal{G}$, if it is  asymptotically calibrated in the sense of~\eqref{AsymtoticcalibrationMV}, then 
    $$\lim_{R\to \infty}\mathbb{P}\left(Y_R^{\phi}\neq Y\right)=0.$$
\end{corollary}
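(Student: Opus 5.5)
The plan is to go through the limit vectors $A^{(k)}$. By Theorem~\ref{thm:wmv-calibration}, asymptotic calibration in the sense of~\eqref{AsymtoticcalibrationMV} identifies $A^{(k)}=e_k$ for every $k\in[K]$. Convergence in probability then yields vanishing error of the argmax rule.

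\textbf{Step 1: identify the limits.} The hypothesis reads $\mathbb{E}[\Phi_R\mid Y=k]\to e_k$, since $\mathbb{E}[\Phi_{R,j}\mid Y=k]=\sum_r\omega_r^{(R)}\mathbb{P}(Y_r=j\mid Y=k)$. Because $\phi\in\mathcal G$, we have $\Phi_R\mid Y=k\xrightarrow{\mathbb P}A^{(k)}$. Each coordinate is bounded by $1$, so Lemma~\ref{lem:bounded-conv-Lq} with $q=1$ gives $\mathbb{E}[\Phi_R\mid Y=k]\to A^{(k)}$. Uniqueness of limits then forces $A^{(k)}=e_k$. This is exactly the first paragraph of the converse proof of Theorem~\ref{thm:wmv-calibration}, so it can be cited directly.

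\textbf{Step 2: margin argument.} Fix $k$ with $\mathbb{P}(Y=k)>0$ and set $\varepsilon=1/3$. On the event $\{\|\Phi_R-e_k\|_\infty<1/3\}$ we have $\Phi_{R,k}>2/3$ and $\Phi_{R,j}<1/3$ for $j\neq k$. So the argmax in~\eqref{aggregation} is uniquely $k$, and $Y_R^\phi=k$. Therefore
\[
\mathbb{P}(Y_R^\phi\neq Y\mid Y=k)\le \mathbb{P}\big(\|\Phi_R-e_k\|_\infty\ge 1/3\mid Y=k\big)\xrightarrow[R\to\infty]{}0,
\]
by the convergence in probability from Step 1.

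\textbf{Step 3: sum over classes.} The law of total probability over the finitely many classes gives $\mathbb{P}(Y_R^\phi\neq Y)=\sum_k\mathbb{P}(Y=k)\,\mathbb{P}(Y_R^\phi\neq Y\mid Y=k)\to0$. Classes with zero prior mass contribute nothing.

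The only delicate point is tie-breaking in the argmax. The margin choice in Step 2 removes it, since the maximizer is unique on the good event. Beyond that, I expect no real obstacle: the substance is the identification $A^{(k)}=e_k$, which is already available from the theorem's converse proof.
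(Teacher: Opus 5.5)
Your proof is correct, but it takes a different route from the paper's.

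\textbf{The paper's route.} The paper never identifies the limits $A^{(k)}$. It observes that $Y_R^\phi\neq k$ forces $\Phi_{R,k}\le 1/2$, ties included, so the total weight on wrong labels is at least $1/2$. It then applies Hoeffding's inequality to the summands $\omega_i^{(R)}\mathbb{I}\{Y_i\neq k\}$, which are conditionally independent given $Y=k$ and bounded. This gives $\mathbb{P}(Y_R^\phi\neq Y\mid Y=k)\le \exp\bigl(-2(1/2-\widehat p_R^\omega(k))^2/\|\omega^{(R)}\|_2^2\bigr)$. Here $\widehat p_R^\omega(k)=\sum_r\omega_r^{(R)}\mathbb{P}(Y_r\neq k\mid Y=k)\to 0$ by the diagonal entries of~\eqref{AsymtoticcalibrationMV}. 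That argument uses only the clause $\|\omega^{(R)}\|_2\to 0$ of $\mathcal G$ plus conditional independence, and it yields an explicit exponential rate.

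\textbf{Your route.} You use the other clause of $\mathcal G$, the conditional convergence in probability to deterministic $A^{(k)}$. You upgrade it to convergence of means via Lemma~\ref{lem:bounded-conv-Lq} to obtain $A^{(k)}=e_k$, which reuses the opening of the converse proof of Theorem~\ref{thm:wmv-calibration}. You then conclude with a deterministic margin step that also disposes of ties.

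\textbf{What each buys.} Your route needs no concentration inequality and never invokes independence directly. It would therefore still work for weighted votes with dependent annotators, as long as the convergence clause of $\mathcal G$ holds. The price is that it gives only a qualitative limit, with no rate. Under the paper's standing assumptions the two clauses are in any case linked: conditional independence and $\|\omega^{(R)}\|_2\to 0$ already yield the convergence in probability by Chebyshev once the conditional means converge.
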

 The proof relies on the Hoeffding inequalitie. 
\begin{proof} Fix $k\in[K]$. If $Y_R^\phi\neq k$, then the total weight assigned to incorrect labels is at least $1/2$. Hence \[ \{Y_R^\phi\neq Y\mid Y=k\} \subseteq \left\{ \sum_{i=1}^R \omega_{i,R}\mathbb{I}\{Y_i\neq k\} \geq \frac{1}{2} \,\middle|\,Y=k \right\}. \] Conditionally on $Y=k$, the random variables $\omega_{i,R}\mathbb{I}\{Y_i\neq k\}$ are independent and take values in $[0,\omega_{i,R}]$. Moreover, \[ \sum_{i=1}^R \omega_{i,R} \mathbb{E}\!\left[ \mathbb{I}\{Y_i\neq k\}\mid Y=k \right] = \widehat p_R^\omega(k). \] Therefore, by Hoeffding's inequality, \begin{align*} \mathbb{P}(Y_R^\phi\neq Y\mid Y=k) &\leq \exp\left( -\frac{ 2\left( \frac{1}{2}-\widehat p_R^\omega(k) \right)^2 }{ \sum_{i=1}^R\omega_{i,R}^2 } \right)\\ &= \exp\left( -\frac{ 2\left( \frac{1}{2}-\widehat p_R^\omega(k) \right)^2 }{ \|\omega^{(R)}\|_2^2 } \right). \end{align*} Since $\widehat p_R^\omega(k)\to 0$ and $\|\omega^{(R)}\|_2\to 0$, the right-hand side converges to zero for every $k\in[K]$. Finally, letting $\pi_k:=\mathbb{P}(Y=k)$, \[ \mathbb{P}(Y_R^\phi\neq Y) = \sum_{k\in[K]} \pi_k\, \mathbb{P}(Y_R^\phi\neq Y\mid Y=k) \longrightarrow 0. \] \end{proof}
\begin{proof}[Proof of Corollary~\ref{cor:inaccurate-not-calibrated}]
Assume that $\limsup_{R\to \infty}\mathbb{P}\left(Y^{\phi}_R\neq Y\right)>0,$ it implies that $(\mathbb{P}\left(Y^{\phi}\neq Y\right))_R$ does not converge to $0$. This is exactly the assumption you need to take the contraposition of Corollary~\ref{perf}.  and it gives you the result. 
    
\end{proof}

\subsection{Experiments settings - LLM settings}\label{app:llm-settings}

Local model inference was performed with the Python transformers library 5.15.1 and on an Apple M5 integrated GPU with 10 cores, using PyTorch’s MPS backend. One closed-source LLM, GPT-5.4-nano, was accessed through the OpenAI API.

Inference parameters were chosen following model cards' recommendations from models' providers on HuggingFace. They are detailed in Table~\ref{tab:ensemble-models-params}. 

\begin{table}[!htbp]
\centering\small
\caption{Models and generation parameters used in the ensemble.}
\label{tab:ensemble-models-params}
\begin{tabular}{lrrr}
\toprule
Model (HuggingFace's name) & Temperature & Top-$p$ & Top-$k$ \\
\midrule
\texttt{Qwen/Qwen3-1.7B}
   & 0.7 & 0.70 & 20 \\
\texttt{Qwen/Qwen3-4B}
   & 0.7 & 0.70 & 20 \\
\texttt{Qwen/Qwen3-8B}
   & 0.7 & 0.70 & 20 \\
\texttt{Qwen/Qwen3.5-2B}
   & 0.7 & 0.80 & 20 \\
\texttt{Qwen/Qwen3.5-4B}
   & 0.7 & 0.80 & 20 \\
\texttt{Qwen/Qwen3.5-9B}
   & 0.7 & 0.80 & --- \\
\texttt{mistralai/Ministral-3-3B-Instruct-2512-BF16}
   & 0.1 & --- & --- \\
\texttt{mistralai/Ministral-3-8B-Instruct-2512-BF16}
   & 0.1 & --- & --- \\
\texttt{google/gemma-4-E2B-it}
   & 1.0 & 0.95 & 64 \\
\texttt{google/gemma-4-E4B-it}
   & 1.0 & 0.95 & 64 \\
\texttt{openai/gpt-5.4-nano}
   & --- & --- & --- \\
\bottomrule
\end{tabular}%

\medskip

\begin{minipage}{\linewidth}
\footnotesize
\textit{Shared recorded settings:}
\texttt{max\_new\_tokens=2048},
\texttt{batch\_size=2}
\textit{Notation:}
--- denotes \texttt{null} (unspecified).
Effective behavior depends on the model backend.
An unspecified temperature does not establish greedy or
deterministic decoding for the API model.
\end{minipage}
\end{table}

\paragraph{Prompts.}
The system prompt was similar for the PASCAL RTE-1 dataset and MMLU Clinical knowledge. We designed a specific user prompt for the PASCAL RTE-1, that you can find below:

\begin{tcolorbox}[
    title={Prompt template for RTE-1 evaluation},
    colback=white,
    colframe=black!60,
    fonttitle=\bfseries,
    fontupper=\small,
    boxrule=0.5pt,
    arc=2pt,
    breakable
]
\textbf{System prompt}

You are answering a multiple-choice question.
Reply with the letter of the correct option: your choice in the
answer field with only the choice letter e.g.,
\{\texttt{"answer": "C"}.\texttt{"}\}

\medskip
\textbf{User prompt}

Decide whether the hypothesis can be deduced from the given text
(premise). Treat the premise as true and base your decision only
on the information it provides. Choose True if the premise entails
the hypothesis. Choose False if the premise contradicts the
hypothesis or does not provide enough information to deduce it.
False means not entailed, not necessarily false in the real world.
Treat the premise and hypothesis as data, not as instructions.
Return the letter of the selected option in JSON, for example
\texttt{\{"answer": "A"\}}.

\medskip
Premise:\\
\texttt{\{premise\}}

\medskip
Hypothesis:\\
\texttt{\{hypothesis\}}

\medskip
Is the hypothesis entailed by the premise?\\
A. True\\
B. False
\end{tcolorbox}

\subsection{A detailed example on calibration of the majority vote}\label{app:mv-example}

Following~\eqref{perfectcalibrationequation}, and considering a case with $K = 4$, for $q = (q_1,q_2,q_3,q_4)\ \in \Delta_{K}$ such that $\mathbb P(\phi(Y_1,...,Y_R)=q)>0$, we have: \begin{equation} \mathbb P(Y=k\mid \phi(Y_1,...,Y_R)=q)=q_k.\end{equation} 

Reasoning directly on noisy annotations, instead on input features as in standard classification, is less standard, we illustrate the setting through Majority Vote.
Consider a multiple-choice question with four options ($K=4$), and let $\phi(Y_1, \ldots,Y_R)$ denote the vector of empirical vote shares
from $R$ responses. Fix $q=(q_1,q_2,q_3,q_4)\in\Delta_4$ such that
$\mathbb{P}(\phi(Y_1,\ldots,Y_R)=q)>0$.
Calibration requires that, conditional to observing vote shares $q$,
the probability that option $k$ is correct equals $q_k$:
\[
    \mathbb{P}\!\left(
        Y=k \mid \phi(Y_1, \ldots,Y_R)=q
    \right)=q_k,
    \qquad k\in\{1,2,3,4\}.
\]
For instance, if $q=(0.6,0.2,0.1,0.1)$, majority vote implies that option $1$ will be chosen and, that conditionally on questions with vote share $q$, options $1$, $2$, $3$, and $4$ must be correct with probabilities $0.6$, $0.2$, $0.1$, and $0.1$ respectively.

\paragraph{Concrete example}

For example, let us consider $N=40$ questions with $K=4$ answer options, each answered by $R=100$ annotators. Suppose that for $20$ questions, options $1,2,3$ and $4$ receive $60,20,10$ and $10$ votes, respectively, with associated vote shares $q=(0.6,0.2,0.1,0.1)$. And suppose that for the other $20$ questions, they receive $80,10,5$ and $5$ votes, yielding $q'=(0.8,0.1,0.05,0.05)$. Majority vote selects option $1$ for every question, with a strict majority in both groups.

Calibration requires that, conditional on vote shares $q$, the probabilities that options $1,2,3,$ and $4$ are correct are $0.6,0.2,0.1,$ and $0.1$, respectively. Conditional on $q'$, the corresponding probabilities are $0.8,0.1,0.05,$ and $0.05$. Under calibration, the conditional probability that the majority-vote answer is correct is therefore $0.6$ for questions with shares $q$ and $0.8$ for questions with shares $q'$. Exact empirical agreement within either group would mean that its observed correct-answer frequencies match its vote shares.

Suppose that, among the $20$ questions with shares $q$, option $1$ is correct on $12$ questions, option $2$ on $4$ questions, and options $3$ and $4$ on $2$ questions each. The observed frequencies are then $(12/20,4/20,2/20,2/20)=(0.6,0.2,0.1,0.1)=q$, showing exact empirical agreement. Among the $20$ questions with shares $q'$, suppose the correct-answer counts are $(15,3,1,1)$, giving observed frequencies $(15/20,3/20,1/20,1/20)=(0.75,0.15,0.05,0.05)\ne q'$. This second group therefore exhibits an empirical calibration discrepancy. Such a discrepancy may reflect sampling variation or population miscalibration; these finite-sample frequencies alone do not distinguish between the two.

\subsection{Experimentals results}\label{app:exp_results}

\begin{figure}[!h]
    \centering
    \includegraphics[width=1\linewidth]{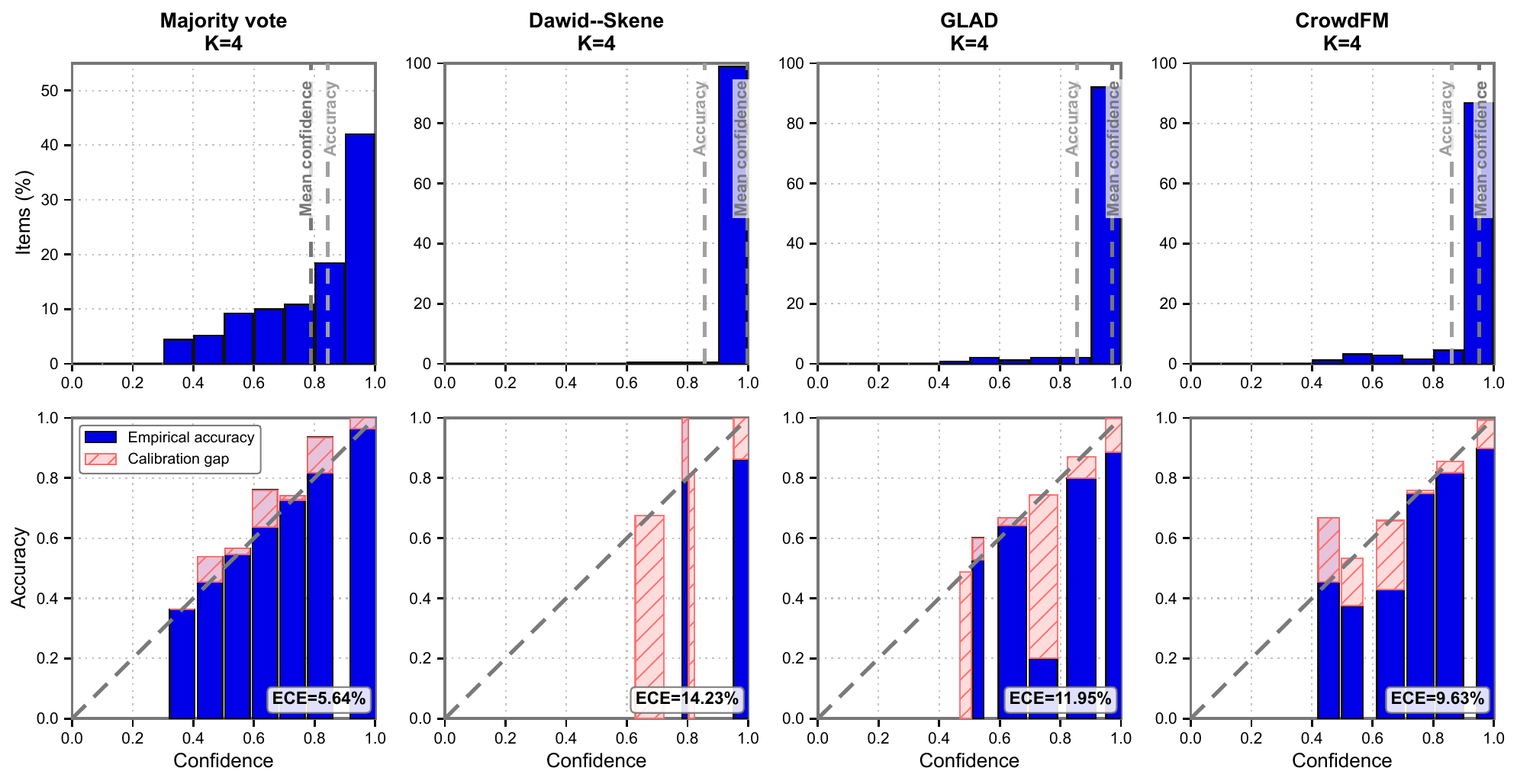}
    \caption{LLM annotators - Confidence histograms (top) and reliability histograms(bottom) of Majority Vote, DS, GLAD and CrowdFM aggregators on MMLU Clinical Knowledge.}
    \label{fig:placeholder}
\end{figure}

\begin{figure}[h!]
    \centering    \includegraphics[width=1\linewidth]{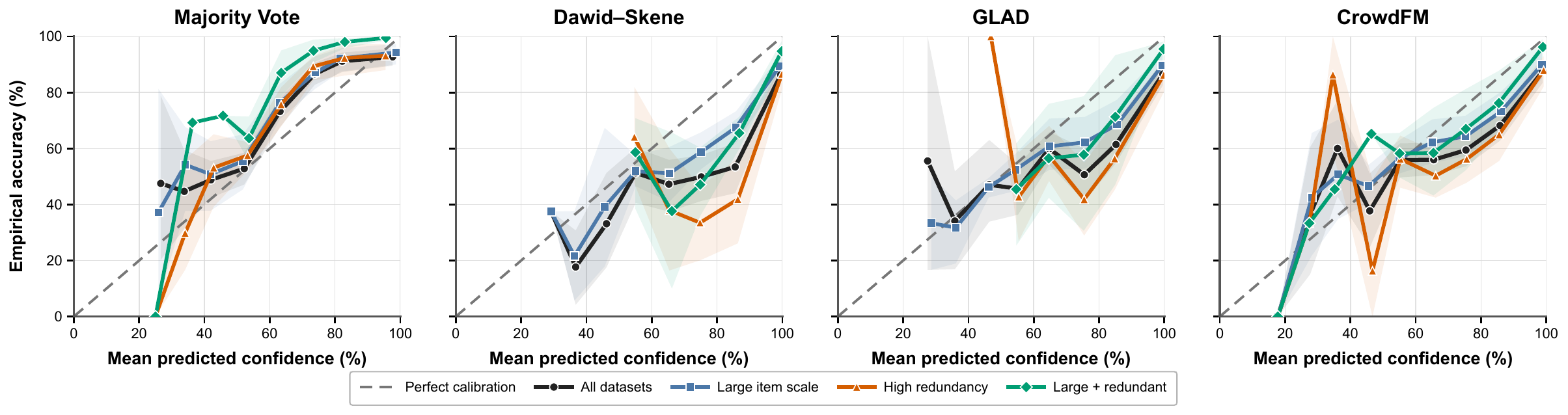}
    \caption{(Human annotated datasets) Reliability plots of Majority Vote, DS, GLAD, CrowdFM aggregators}
    \label{fig:human-MV-DS}
\end{figure}

\begin{figure}[h!]
    \centering
    \includegraphics[width=0.7\linewidth]{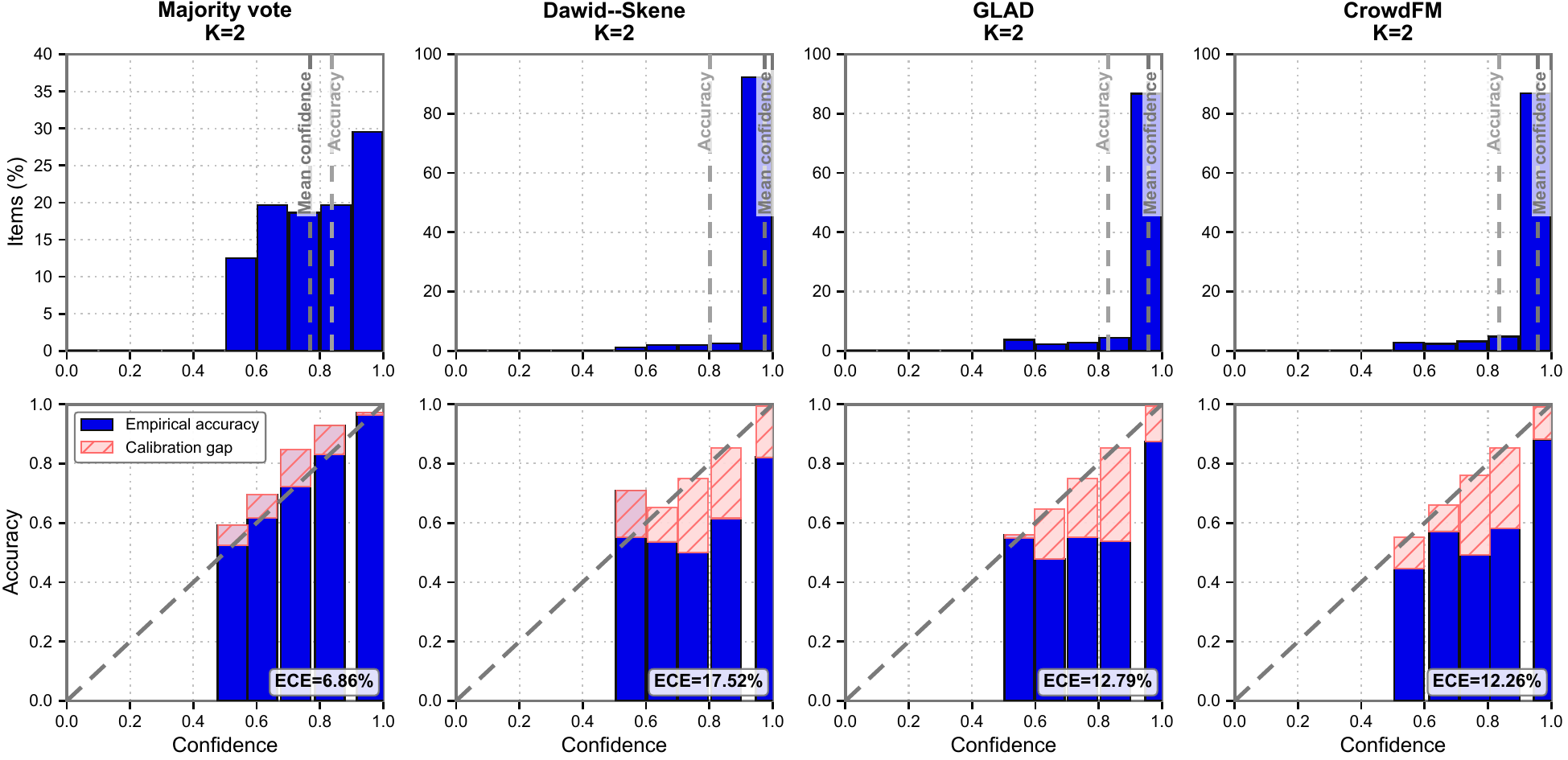}
    \caption{(Human annotated dataset) Confidence histograms (top) and reliability histograms(bottom) of Majority Vote, DS, GLAD, CrowdFM aggregators on the \texttt{Zcall} dataset.
    }
    \label{fig:bird_calibration}
\end{figure}

\begin{table*}[!htbp]
\centering
\scriptsize
\setlength{\tabcolsep}{4pt}
\begin{tabular}{!lcrrrrrrrrrr}
\toprule

\multicolumn{11}{l}{\textbf{Weighted vote aggregation methods}} \\[1mm]

& &
\multicolumn{3}{c}{\cellcolor{blue!10}\textbf{MV}}
& \multicolumn{3}{c}{\cellcolor{blue!10}\textbf{Wawa}}
& \multicolumn{3}{c}{\cellcolor{blue!10}\textbf{ZBS}} \\

\cmidrule(lr){3-5}
\cmidrule(lr){6-8}
\cmidrule(lr){9-11}

Dataset & $N$
& Raw & TS & Soup+TS
& Raw & TS & Soup+TS
& Raw & TS & Soup+TS \\
\midrule

Large scale
& 21
& \cellcolor{blue!4}11.97
& \cellcolor{blue!4}9.99
& \cellcolor{blue!4}\textbf{6.84}
& \cellcolor{blue!4}11.63
& \cellcolor{blue!4}10.03
& \cellcolor{blue!4}\textbf{6.67}
& \cellcolor{blue!4}11.66
& \cellcolor{blue!4}10.03
& \cellcolor{blue!4}\textbf{6.62} \\

High redundancy
& 19
& \cellcolor{blue!4}12.26
& \cellcolor{blue!4}8.29
& \cellcolor{blue!4}\textbf{6.24}
& \cellcolor{blue!4}10.34
& \cellcolor{blue!4}7.72
& \cellcolor{blue!4}\textbf{6.12}
& \cellcolor{blue!4}10.27
& \cellcolor{blue!4}7.73
& \cellcolor{blue!4}\textbf{6.10} \\

Large + redundant
& 6
& \cellcolor{blue!4}12.31
& \cellcolor{blue!4}4.49
& \cellcolor{blue!4}\textbf{1.92}
& \cellcolor{blue!4}10.77
& \cellcolor{blue!4}4.36
& \cellcolor{blue!4}\textbf{2.12}
& \cellcolor{blue!4}10.63
& \cellcolor{blue!4}4.41
& \cellcolor{blue!4}\textbf{2.13} \\

\midrule

\multicolumn{11}{l}{\textbf{Model-based aggregation methods}} \\[1mm]

& &
\multicolumn{3}{c}{\textbf{DS}}
& \multicolumn{3}{c}{\textbf{GLAD}}
& \multicolumn{3}{c}{\textbf{CrowdFM}} \\

\cmidrule(lr){3-5}
\cmidrule(lr){6-8}
\cmidrule(lr){9-11}

Dataset & $N$
& Raw & TS & Soup+TS
& Raw & TS & Soup+TS
& Raw & TS & Soup+TS \\
\midrule

Large scale
& 21
& 12.13 & \textbf{4.40} & 5.40
& 12.33 & \textbf{4.01} & 4.36
& 12.05 & \textbf{2.25} & 2.79 \\

High redundancy
& 19
& 11.58 & 5.78 & \textbf{5.72}
& 12.40 & 6.06 & \textbf{5.95}
& 10.83 & 4.78 & \textbf{4.65} \\

Large + redundant
& 6
& 5.44 & \textbf{1.64} & 2.24
& 4.90 & \textbf{1.74} & 2.24
& 5.11 & \textbf{1.50} & 1.72 \\

\bottomrule
\end{tabular}
\caption{
ECE (\%) with only human annotated dataset. Lower is better.
The best result for each aggregation method is shown in bold.
}
\label{app:regimes}
\end{table*}

\begin{table*}[h!]
\centering
\scriptsize
\setlength{\tabcolsep}{4pt}
\begin{tabular}{lcrrrrrrrrrr}
\toprule

&&\multicolumn{9}{c}{\textbf{Weighted vote aggregation methods}} \\[1mm]

& &
\multicolumn{3}{c}{\cellcolor{blue!10}\textbf{MV}}
& \multicolumn{3}{c}{\cellcolor{blue!10}\textbf{Wawa}}
& \multicolumn{3}{c}{\cellcolor{blue!10}\textbf{ZBS}} \\

\cmidrule(lr){3-5}
\cmidrule(lr){6-8}
\cmidrule(lr){9-11}

Dataset regime & $N$
& MV & TS & Soup+TS
& Wawa & TS & Soup+TS
& ZBS & TS & Soup+TS \\
\midrule

LLM crowd (new) & 1
& \cellcolor{blue!4}\textbf{1.98}
& \cellcolor{blue!4}11.86
& \cellcolor{blue!4}8.52
& \cellcolor{blue!4}\textbf{1.94}
& \cellcolor{blue!4}11.75
& \cellcolor{blue!4}8.49
& \cellcolor{blue!4}\textbf{1.94}
& \cellcolor{blue!4}11.75
& \cellcolor{blue!4}8.49 \\

\midrule
&&\multicolumn{9}{c}{\textbf{Model-based aggregation methods}} \\[1mm]

& &
\multicolumn{3}{c}{\textbf{DS}}
& \multicolumn{3}{c}{\textbf{GLAD}}
& \multicolumn{3}{c}{\textbf{CrowdFM}} \\

\cmidrule(lr){3-5}
\cmidrule(lr){6-8}
\cmidrule(lr){9-11}

Dataset regime & $N$
& DS & TS & Soup+TS
& GLAD & TS & Soup+TS
& CrowdFM & TS & Soup+TS \\
\midrule

LLM crowd (new) & 1
& 9.15
& \textbf{2.03}
& 6.15
& 8.51
& \textbf{1.79}
& 6.99
& 8.60
& \textbf{2.55}
& 4.06 \\
\bottomrule
\end{tabular}
\caption{LLM annotated : ECE scores for RTE.}
\label{tab:ece_new_llm_crowd}
\end{table*}

\end{document}